\theoremstyle{acmplain}
\newcommand{\ceil}[1]{\left\lceil #1 \right\rceil}
\newcommand{\floor}[1]{\left\lfloor #1 \right\rfloor}
\title{The Price of Majority Support}
\author{Robin Fritsch}
\affiliation{
  \institution{ETH Zürich}
  \country{Switzerland}}
\email{rfritsch@ethz.ch}
\author{Roger Wattenhofer}
\affiliation{
  \institution{ETH Zürich}
  \country{Switzerland}}
\email{wattenhofer@ethz.ch}
\begin{abstract}
We consider the problem of finding a compromise between the opinions of a group of individuals on a number of mutually independent, binary topics.
In this paper, we quantify the loss in representativeness that results from requiring the outcome to have majority support, in other words, the ``price of majority support''.
Each individual is assumed to support an outcome if they agree with the outcome on at least as many topics as they disagree on.
Our results can also be seen as quantifying Anscombes paradox which states that topic-wise majority outcome may not be supported by a majority.
To measure the representativeness of an outcome, we consider two metrics.
First, we look for an outcome that agrees with a majority on as many topics as possible.
We prove that the maximum number such that there is guaranteed to exist an outcome that agrees with a majority on this number of topics and has majority support, equals $\ceil{(t+1)/2}$ where $t$ is the total number of topics.
Second, we count the number of times a voter opinion on a topic matches the outcome on that topic. The goal is to find the outcome with majority support with the largest number of matches.
We consider the ratio between this number and the number of matches of the overall best outcome which may not have majority support.
We try to find the maximum ratio such that an outcome with majority support and this ratio of matches compared to the overall best is guaranteed to exist.
For 3 topics, we show this ratio to be $5/6\approx 0.83$.
In general, we prove an upper bound that comes arbitrarily close to $2\sqrt{6}-4\approx 0.90$ as $t$ tends to infinity.
Furthermore, we numerically compute a better upper and a non-matching lower bound in the relevant range for $t$.
\end{abstract}
\keywords{Approval Voting; Majority Support; Anscombes Paradox; Populism}
\newcommand{\BibTeX}{\rm B\kern-.05em{\sc i\kern-.025em b}\kern-.08em\TeX}
\begin{document}


\pagestyle{fancy}


\settopmatter{printfolios=true}
\maketitle 


\section{Introduction}
How can a group of individuals find a compromise on a given set of disputed topics? This central problem in social choice theory comes up in a large number of applications. For instance, which motions should a political party support in an upcoming election? Or which candidates should be elected as committee members? 

More precisely, we consider a situation in which decisions on a number of mutually independent, binary topics are to be made, and each voter has an opinion on each topic.
For example, consider the situation shown in Figure \ref{fig:intro_example}.

\begin{figure}[h]
\centering
\begin{tikzpicture}[decoration={brace,amplitude=5pt}]
    \matrix (m) [matrix of nodes, column sep = 4.5pt, row sep = -2pt] {
    Y & Y & Y & Y & Y & Y & Y\\
    Y & Y & Y & Y & Y & Y & Y\\
    Y & Y & Y & Y & Y & Y & Y\\
    Y & Y & Y & Y & Y & Y & Y\\
    Y & Y & Y & N & N & N & N\\
    N & N & N & Y & Y & N & N\\
    N & N & N & Y & Y & N & N\\
    N & N & N & N & N & Y & Y\\
    N & N & N & N & N & Y & Y\\
    };
\end{tikzpicture}
\caption{The opinions of 9 voters (rows) on 7 binary topics (columns). In this matrix, the $j$-th entry in the $i$-th row indicates whether voter $i$ has opinion Y (``Yes'') or N (``No'') on topic $j$.}
\label{fig:intro_example}
\end{figure}

In this example, a majority of at least 5 voters supports Y on every topic.
Therefore, the natural decision for the group would be to choose Y on every topic.
However, the bottom 5 would disagree with this Y-only decision. These bottom 5 voters would in fact rather support the complete opposite -- an N-only decision. And the bottom 5 voters are in the majority!

Such lack of majority support clearly poses a problem. Usually majority support is a natural must-have criterion.
If the outcome of difficult negotiations is not supported by a majority, it will not be ratified.

This motivates to study the restricted problem in which only outcomes with a majority support are acceptable.
What is the best outcome under this restriction and how well can it represent the voter opinions?
Finding a proposal with majority support is simple: if some (any!) proposal is not supported by a majority, its opposite will be. Finding a good proposal with majority support however turns out to be difficult.
In this paper, we will study guarantees on how well a proposal with majority support can represent the voter opinions according to two metrics.

First, we look for a proposal with majority support that agrees with a majority on as many topics as possible.
In our example, the proposal YYYYYYN (among others) has majority support, and also agrees with a majority on 6 of the 7 topics.
In general, how low can this number be for the best compromise supported by a majority?
We show the tight result that this number can be as low as $\ceil{(t+1)/2}$ when $t$ is the number of topics.

However, counting the topics on which the result agrees with a majority does not take into account that some majorities can be larger than others.
To better measure how well a decision represents the voter opinions, we consider a second metric. This second quality metric counts the number of times a voter opinion on a topic matches the proposal on that topic.
In the example above, there are a total of 39 matching opinions between the voters and the Y-only proposal, the most among all possible proposals. We call such a proposal the topic-wise majority. However, as discussed before the topic-wise majority proposal Y-only does not have majority support.
The goal is to find a proposal with majority support with the largest number of matching opinions with the voters. In the example, this would be (among others) NNYYYYY with 37 matches.

The ratio between the best proposal supported by a majority and the topic-wise majority proposal can be seen as the ``price of majority support".
This paper aims to quantify this price of majority support.
In the example, the best proposal with majority support achieves a ratio of $37/39\approx 0.95$ of the maximum possible matches.
We try to find the maximum ratio such that a compromise with this ratio and with majority support is guaranteed to exist.
For 3 topics, we show this ratio to be $5/6\approx 0.83$.
In general, we prove an upper bound that comes arbitrarily close to $2\sqrt{6}-4\approx 0.90$ as $t$ tends to infinity.
Furthermore, we can numerically compute a stronger upper bound which equals approximately $0.79$ for large $t$ (i.e.\ above 1000).
We also compute a non-matching lower bound that is within a range of $0.11$ of the upper bound.

Our results also apply to the question of how well a population of voters can be represented by two parties in a representative democracy.
We assume the election campaign focuses on a number of binary topics and voters vote for the party with which they share the most opinions.
If the two parties have the same opinion on a topic, this topic will be irrelevant for the voter's choice of a party. Without loss of generality we can therefore only consider topics on which the two parties have opposing views. 

For the voter opinions shown in Figure \ref{fig:intro_example}, a party supporting the majority opinion on every topic (YYYYYYY), which is arguably the optimal outcome of the election and the result in direct democracy, would lose the election.
Instead, the opposing party supporting the minority opinion on each of these topics (NNNNNNN) would win.
How well a representative democracy can work in the best case is then equivalent to quantifying the price of majority support.
More precisely, it is equivalent to studying how close a  party winning the election can come to the (optimal) party supporting the majority opinion on each topic.

The results in this paper can be understood as a mathematical treatment of the power and limitations of political ``populism'', i.e.\ the limits of a party that tries to always follow the majority opinions.

\section{Related Work}

If the binary decisions to be made are whether or not to elect a certain candidate, our setting is equivalent to deciding on a committee or assembly by approval voting \cite{brams07}.
For approval voting, each voter indicates their preferences by choosing a set of candidates they approve. In other words, every voter decides for every candidate whether or not to approve them. Based on these preferences a set of winning candidates is chosen.
Approval voting is particularly popular for committee elections, and has been used by many organizations, including also scientific societies \cite{brams05}.

The most natural voting rule in approval voting is the so-called minisum rule which selects a committee (possibly of a fixed size) that minimizes the sum of hamming distances to the voters.
This is equivalent to our formulation of maximizing the number of matching opinions with voters.

Besides minisum, a variety of different voting rules for approval voting have been studied \cite{kilgour10}.
One example is the 
minimax solution that minimizes the maximum hamming distance to the voters instead of the sum \cite{brams07minimax}.

The situation in which more than half the voters disagree with the topic-wise majority on more than half the topics is known as the Anscombe paradox, introduced by Gertrude Elizabeth Margaret Anscombe \cite{anscombe76}.
Several very similar voting paradoxes have also been studied \cite{nurmi99, saari01, laffond10}, e.g.\ the well-known Condorcet paradox \cite{condorcet85, gehrlein83}, the Ostrogorski paradox \cite{ostrogorski02, rae76}, and the paradox of multiple elections \cite{brams98, scarsini98}.
The occurrence of such paradoxes has also been studied empirically \cite{kaminski15, kurrild18}.

Previous research has studied sufficient conditions on the voter matrix such that guarantee the Anscombe paradox will not occur.
The ``Rule of Three-Fourth'' \cite{wagner83} states the Anscombe paradox will not occur if the average majority on each topic is at least $3/4$.
This rule was later generalized to the ``Rule of $(1-\alpha/2)$'' which prevents the $\alpha$-Anscombe paradox \cite{wagner84}.
The latter describes the situation when a proportion $\alpha$ of voters disagree with topic-wise majority proposal.
Two further conditions that prevent this kind of paradox are the single-switch property \cite{laffond06} and an unanimity condition \cite{laffond13}.
The latter is a limit on how much the opinions of any two voters may differ.

All these results have in common that they describe sufficient conditions on the voter matrix which prevent the paradox.
These are however of limited use in practice since it is not clear if a certain group of voters satisfies the conditions.
In this paper, we take a different approach and examine which outcomes that avoid the paradox are closest to minisum.
In other words, we present the first \textit{quantitative Anscombe} results. Our results answer the question of how bad the Anscombe paradox can get, i.e.\ how much a result needs to be changed to avoid the paradox.
Note that our approach can be applied in practice as a voting rule for approval voting: Instead of selecting the minisum outcome, one could choose the closest-to-minisum compromise that has majority support.
Our results then provide guarantees on how close the outcome is to the topic-wise majority result, i.e. how well it represents the voter opinions.

Note that the voting rule above (with the second representativeness metric we consider) is equivalent to the Slater rule \cite{slater61, nehring14} in judgement aggregation \cite{list09}.
Judgement aggregation however, studies issues that can be logically interconnected while the issues in our model are assumed to be independent.

\section{Formal Model}

We consider a situation in which $n\in\mathbb{N}$ voters need to make decisions on $t\in\mathbb{N}$ binary topics.
The two alternatives for each topic are denoted Y (``Yes'') and N (``No'').
We assume each voter has an opinion on each topic meaning each voter's opinions can be written as a vector $v\in\{\text{Y},\text{N}\}^t$. 

All voter opinions are summed up in the \emph{voter matrix} $V\in \{Y,N\}^{n\times t}$ in which the $j$-th entry of the $i$-th row indicates the opinion of voter $i$ on topic $j$. (Figure \ref{fig:intro_example} shows an example.)
For each topic, we call the opinion that is held by a majority of voters the \emph{majority opinion}. Without loss of generality, we assume that the majority opinion is Y for each topic, i.e.\ there are at least as many Ys in any column as there are Ns.
Let $\mathcal{V}_t$ be the set of all voter matrices for $t$ topics and an arbitrary number of voters.

Furthermore, every possible \emph{proposal} is represented by a vector $p\in \{\text{Y},\text{N}\}^t$.
We assume that a voter $v$ supports a proposal $p$ if the hamming distance between $v$ and $p$ is no larger than $t/2$. In other words, if the voter agrees with a proposal on at least as many topics as they disagree on, the voter supports this proposal.
We say a proposal is supported by $V$ if at least half of all voters support the proposal.
For a given proposal, we count the number of \emph{matches} between an entry in the voter matrix and the entry in the proposal for that topic.
The maximum possible number of matches is $nt$.


For a proposal $p$, the \emph{opposite proposal} $\bar{p}$ chooses the opposite option on each topic.
We call a proposal whose vector contains exactly $k$ Ys a $k$-proposal for $k=0,\ldots, t$, and similarly a voter whose vector contains exactly $k$ Ys as a $k$-voter.
Finally, let $v_k$ be the number of $k$-voters in a voter matrix $V$.

\section{Most Majority Decisions}

We begin by looking for the proposal that agrees with as many majority opinions as possible among the proposals with majority support.

\begin{definition}
For a voter matrix $V$, let $md_V$ be the maximum number such that there exists a proposal supported by $V$ with $md_V$ majority decisions.
Furthermore, let $md_t = \min_{V\in \mathcal{V}_t} md_V$.
\end{definition}

In other words, $md_t$ is the maximum number, such that for every voter matrix with $t$ topics, a proposal with $md_t$ majority decisions that is supported by a majority of voters exists.
From the Anscombe paradox we know that the proposal consisting of all $t$ majority opinions might not be supported by a majority, i.e.\ $md_t<t$.
In the following we will prove that $md_t=\ceil{(t+1)/2}$.

\begin{lemma}\label{lem:md_t_upper_bound}
It holds that $md_t \leq \ceil{\frac{t+1}{2}}$.
\end{lemma}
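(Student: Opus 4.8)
To prove the upper bound $md_t \leq \ceil{(t+1)/2}$, the plan is to construct an explicit voter matrix $V \in \mathcal{V}_t$ for which no proposal with majority support can agree with the majority on more than $\ceil{(t+1)/2}$ topics. Following the flavour of the introductory example (Figure~\ref{fig:intro_example}), I would build $V$ from a small number of voter blocks, each block consisting of identical voters, chosen so that (i) the majority opinion on every topic is Y, and (ii) any proposal that is ``too close'' to the all-Y proposal is rejected by a strict majority of the voters.

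The concrete construction I have in mind uses roughly three groups. A first group of voters agrees with Y on all topics; a second and third group each disagree with Y on a large, carefully chosen subset of topics, arranged so that these disagreeing voters together form a majority, yet the topic-wise majority on each individual topic is still Y (this is exactly the Anscombe-style tension). One then picks the sizes of the blocks and the supports of the ``N''-patterns so that: a proposal $p$ is supported by a voter in a disagreeing block only if $p$ itself disagrees with Y on enough topics; quantitatively, for $p$ to be supported by a majority it must differ from the all-Y vector on at least $t - \ceil{(t+1)/2} = \floor{(t-1)/2}$ topics, i.e.\ it can have at most $\ceil{(t+1)/2}$ majority decisions. Splitting into the cases $t$ even and $t$ odd (and keeping track of the floor/ceiling and of the convention that hamming distance exactly $t/2$ still counts as support) will be the routine-but-delicate part.

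The key steps, in order, are: (1) write down the candidate matrix $V$ explicitly in terms of $t$, specifying each voter block and its multiplicity; (2) verify that Y is the majority opinion on every topic, so $V \in \mathcal{V}_t$ legitimately; (3) take an arbitrary proposal $p$ and show that if $p$ agrees with the majority (Y) on more than $\ceil{(t+1)/2}$ topics, then the set of voters who do \emph{not} support $p$ is a strict majority, using the hamming-distance support rule; (4) conclude $md_V \le \ceil{(t+1)/2}$ and hence $md_t \le \ceil{(t+1)/2}$.

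The main obstacle I anticipate is step (3) combined with step (2): one must simultaneously keep each topic's column majority at Y \emph{and} make the disagreeing voters a genuine majority who reject every near-all-Y proposal. These pull in opposite directions, so the block sizes and the overlap structure of the N-patterns have to be tuned precisely, and the parity of $t$ changes the exact numbers. I would handle this by first nailing the odd case (where $\ceil{(t+1)/2} = (t+1)/2$ is cleanest) with an explicit symmetric design, then adjusting by one topic/voter for the even case, checking the boundary hamming distance $t/2$ carefully since at that distance a voter still counts as supporting.
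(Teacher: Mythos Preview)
Your overall strategy---exhibit an explicit voter matrix and show that any proposal with more than $\ceil{(t+1)/2}$ majority decisions lacks majority support---is exactly what the paper does. However, the paper's construction is considerably simpler than the three-block, parity-split design you anticipate. It takes $2t-1$ voters: the first $t$ are the ``diagonal'' $1$-voters (voter $i$ has Y only on topic $i$), and the remaining $t-1$ are all-Y voters. Each column then has exactly $t$ Ys and $t-1$ Ns, so Y is the majority everywhere. For any proposal $p$ with $k \ge \ceil{(t+1)/2}+1$ Ys, each of the first $t$ voters disagrees with $p$ on at least $k-1 \ge \ceil{(t+1)/2} > t/2$ topics and hence does not support $p$; since these $t$ voters are a strict majority of $2t-1$, $p$ is rejected. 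This single construction works uniformly for all $t$ with no even/odd case split and only two voter types. Your plan would presumably succeed after enough tuning, but the ``delicate'' balancing you worry about disappears once the dissenting block consists of $1$-voters: giving each dissenter a single Y is the extremal choice that makes step~(3) immediate while still leaving a bare Y-majority in every column.
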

\begin{proof}
Since $md_t = \min_{V\in \mathcal{V}_t} md_V$, it suffices to show that for every $t$, the following voter matrix $W$ satisfies $md_W= \ceil{(t+1)/2}$.

\begin{figure}[h]
\centering
\begin{tikzpicture}[decoration={brace,amplitude=5pt}]
    \matrix (m) [matrix of nodes, column sep = 4.5pt, row sep = -2pt] {
    Y & N & $\cdots$ & N\\
    N & Y & $\cdots$ & N\\
    $\vdots$ & & $\ddots$ & $\vdots$\\
    N & N & $\cdots$ & Y\\
    Y & Y & $\cdots$ & Y\\
    $\vdots$ & & $\ddots$ & $\vdots$\\
    Y & Y & $\cdots$ & Y\\
    };
    \draw[decorate,thick] ([yshift=-2pt] m-1-4.north east) -- node[right=5pt] {$t$} ([yshift=2pt] m-4-4.south east);
    \draw[decorate,thick] ([yshift=-2pt] m-5-4.north east) -- node[right=5pt] {$t-1$} ([yshift=2pt] m-7-4.south east);
\end{tikzpicture}
\end{figure}

Any proposal containing at least $\ceil{(t+1)/2}+1$ Ys will disagree on at least $\ceil{(t+1)/2}$ topics with any of the first $t$ voters.
Hence, no such proposal will be supported by a majority. On the other hand, any $\ceil{(t+1)/2}$-proposal is clearly supported by a majority.
\end{proof}


Proving the sharp lower bound of $\ceil{(t+1)/2}$ is more involved.
In order to do so, we introduce the following notation which is also essential for the next section.

\begin{definition}
Let $s_{k,l}$ be the number of $k$-proposals (among all $\binom{t}{k}$ possible $k$-proposals) which are supported by an $l$-voter.
\end{definition}

As an example, consider the case $t=3$.
Here the $1$-voter YNN supports the two $2$-proposals YYN and YNY (but not NYY). So we have $s_{2,1}=2$.

Note that $s_{k,l}$ is well-defined as all $l$-voters support the same number of $k$-proposals.
In the example above, the other two $1$-voters also support two $2$-proposals: NYN supports YYN and NYY while NNY supports NYY and YNY.

To see that in general any two $l$-voters support the same number of $k$-proposals, note that there exists a permutation of topics that transforms one of the $l$-voters into the other. Furthermore, the set of all $k$-proposals is invariant under any permutation of the topics.

In the following, we give an explicit formula for $s_{k,l}$.

\begin{lemma}\label{lem:s_kl}
For any $k\in\{ \ceil{(t+1)/2}, \ldots, t\}$ and $l\in\{0,\ldots, t\}$, we have
\begin{equation*}
    s_{k,l} = \sum_{x=\ceil{\frac{k+l-\floor{t/2}}{2}}}^{k} \binom{l}{x}\binom{t-l}{k-x}.
\end{equation*}
\end{lemma}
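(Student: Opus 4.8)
The plan is to count directly the number of $k$-proposals supported by a fixed $l$-voter. Without loss of generality, fix the $l$-voter $v$ to be the one whose first $l$ coordinates are $Y$ and whose remaining $t-l$ coordinates are $N$. A $k$-proposal $p$ is determined by choosing which $k$ of the $t$ coordinates are $Y$. Let $x$ denote the number of these $Y$-coordinates of $p$ that fall among the first $l$ positions (where $v$ is also $Y$); then $k-x$ of the $Y$-coordinates of $p$ fall among the last $t-l$ positions (where $v$ is $N$). The number of $k$-proposals with this given value of $x$ is exactly $\binom{l}{x}\binom{t-l}{k-x}$, since we independently choose $x$ of the $l$ agreement-positions and $k-x$ of the $t-l$ disagreement-positions.

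The next step is to determine for which values of $x$ such a proposal is supported by $v$, i.e.\ has Hamming distance at most $t/2$ from $v$. The coordinates where $p$ and $v$ differ are: the $l-x$ positions among the first $l$ where $v$ has $Y$ but $p$ has $N$, plus the $k-x$ positions among the last $t-l$ where $v$ has $N$ but $p$ has $Y$. Hence the Hamming distance is $(l-x)+(k-x) = k+l-2x$. The support condition $k+l-2x \le t/2$ rearranges to $x \ge (k+l-t/2)/2$, and since $x$ is an integer this is equivalent to $x \ge \ceil{(k+l-\floor{t/2})/2}$ — here one should check the small casework that $\ceil{(k+l)/2 - t/4} = \ceil{(k+l-\floor{t/2})/2}$, distinguishing $t$ even from $t$ odd (when $t$ is odd, $t/2$ is a half-integer but $k+l-2x$ is an integer, so the threshold effectively uses $\floor{t/2}$). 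Summing $\binom{l}{x}\binom{t-l}{k-x}$ over all valid $x$ from this lower limit up to $x=k$ then gives the claimed formula; the upper limit $x = k$ is the trivial bound, and terms with $x$ outside the range $\max(0,k-(t-l)) \le x \le \min(k,l)$ vanish automatically because of the binomial coefficients, so no further truncation is needed.

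The only genuinely delicate point is verifying the ceiling identity relating $(k+l-t/2)/2$ to $(k+l-\floor{t/2})/2$ when $t$ is odd, and confirming that the hypothesis $k \ge \ceil{(t+1)/2}$ is what guarantees the lower summation limit does not exceed $k$ (so that the sum is nonempty) and, more importantly, is used to ensure $s_{k,l}$ is interpreted correctly — namely that a $k$-proposal is a well-defined object and the support count matches the intended range of $k$ in the later application. I expect the combinatorial identification of the count with $\binom{l}{x}\binom{t-l}{k-x}$ to be routine; the main obstacle is simply the floor/ceiling bookkeeping in translating $k+l-2x \le t/2$ into the stated summation bound.
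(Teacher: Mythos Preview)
Your proposal is correct and follows essentially the same argument as the paper's proof. The only cosmetic difference is that the paper counts the number of \emph{agreements} and requires this to be at least $\ceil{t/2}$, whereas you count the Hamming distance and require it to be at most $t/2$; these are trivially equivalent, and the floor/ceiling bookkeeping you flag as delicate is dispatched in the paper in one line by noting that the integer inequality $x+(t-l)-(k-x)\ge \ceil{t/2}$ is equivalent to $x\ge \ceil{(k+l-\floor{t/2})/2}$.
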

\begin{proof}
Consider an arbitrary $l$-voter.
For $x=0,\ldots,k$, consider all $k$-proposal that have exactly $x$ matches with the $l$-voter on its Ys. 
There are $\binom{l}{x} \binom{t-l}{k-x}$ such proposals.
These proposals match the voter on $x$ Ys as well as on $(t-l)-(k-x)$ Ns.
Hence, the $l$-voter supports the $k$-proposal if and only if $x+(t-l)-(k-x)\geq \ceil{t/2}$.
The latter is equivalent to $x\geq \ceil{(k+l-\floor{t/2})/2}$.
\end{proof}

\begin{lemma}\label{lem:skl_inequality}
Let $V$ be a voter matrix with $md_V<w$ for some $w\geq \ceil{(t+1)/2}$. Then
\begin{equation}\label{eq:lp_ineq}
    s_{k,0}v_0 + s_{k,1}v_1+ \ldots + s_{k,t}v_t \leq s_{k,t} \floor{\frac{n-1}{2}}
\end{equation}
for $k=w,\dots,t$.
\end{lemma}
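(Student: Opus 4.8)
The plan is a double-counting argument over incidences between $k$-proposals and the voters that support them, carried out separately for each $k\in\{w,\dots,t\}$. So fix such a $k$.

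First I would observe that every $k$-proposal fails to be supported by $V$. Indeed, since by our standing assumption the majority opinion is Y on every topic, a $k$-proposal (which has $k$ Ys) makes exactly $k$ majority decisions; as $k\ge w>md_V$, it makes strictly more majority decisions than any proposal with majority support, so it cannot itself have majority support. Because ``supported by $V$'' means at least $\ceil{n/2}$ voters support it, a proposal without majority support is supported by at most $\ceil{n/2}-1=\floor{(n-1)/2}$ voters.

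Next I would count, in two ways, the number of pairs $(p,v)$ where $p$ is a $k$-proposal and $v$ is a voter supporting $p$. Grouping by the voter, an $l$-voter supports exactly $s_{k,l}$ of the $k$-proposals by definition of $s_{k,l}$, so the count equals $\sum_{l=0}^{t} s_{k,l} v_l$, the left-hand side of \eqref{eq:lp_ineq}. Grouping by the proposal, each of the $\binom{t}{k}$ many $k$-proposals is supported by at most $\floor{(n-1)/2}$ voters by the previous paragraph, so the count is at most $\binom{t}{k}\floor{(n-1)/2}$.

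To conclude, it remains to recognize that $\binom{t}{k}=s_{k,t}$: a $t$-voter (all Ys) is at Hamming distance $t-k$ from any $k$-proposal, and $t-k\le t/2$ since $k\ge w\ge\ceil{(t+1)/2}\ge\ceil{t/2}$, so the $t$-voter supports all $\binom{t}{k}$ of the $k$-proposals. Combining the two counts then gives \eqref{eq:lp_ineq} for this $k$, hence for all $k=w,\dots,t$. There is no real obstacle here; the only points needing a little care are the floor/ceiling bookkeeping — that lacking majority support caps the number of supporters at $\floor{(n-1)/2}$, and that the hypothesis $k\ge\ceil{(t+1)/2}$ is exactly what makes every $k$-proposal acceptable to the all-Y voter, so that $s_{k,t}=\binom{t}{k}$.
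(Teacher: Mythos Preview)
Your argument is correct and is essentially identical to the paper's proof: both double count the pairs (voter, $k$-proposal they support), bound the proposal-side count by $\binom{t}{k}\floor{(n-1)/2}$ using $md_V<w\le k$, and then identify $\binom{t}{k}=s_{k,t}$ via the all-Y voter. You simply spell out a couple of steps (why no $k$-proposal can have majority support, and the $\ceil{n/2}-1=\floor{(n-1)/2}$ bookkeeping) that the paper leaves implicit.
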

\begin{proof}
The term on the left side of the inequality represents the sum over all $k$-proposals of the number of supporter each of these proposals has among the voters in $V$.
Since there are $\binom{t}{k}$ $k$-proposals and none of them is supported by $V$, i.e. each of them is supported by at most $\floor{(n-1)/2}$ voters, this number is no larger than $\binom{t}{k} \floor{(n-1)/2}$.
Finally, it remains to verify that $\binom{t}{k}=s_{k,t}$ for $k\geq \ceil{(t+1)/2}$.
This is true since a voter opinion Y on all $t$ topics supports all proposals with at least $\ceil{(t+1)/2}$ Ys.
\end{proof}

With this, we are now ready to prove the main theorem of this section which gives a tight lower bound on $md_t$.

\begin{theorem}\label{thm:mdt_lower}
It holds that $md_t\geq \ceil{\frac{t+1}{2}}$.
\end{theorem}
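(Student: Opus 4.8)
The plan is to argue by contradiction, using Lemma~\ref{lem:skl_inequality} together with the standing assumption that the majority opinion is Y on every topic. First I would reduce to $t$ odd: for $t$ even, restricting a matrix $V\in\mathcal{V}_t$ to its first $t-1$ topics gives $V'\in\mathcal{V}_{t-1}$, and appending a Y on topic $t$ to a supported proposal of $V'$ that attains $md_{V'}$ majority decisions yields a supported proposal of $V$ with one more majority decision (a voter within Hamming distance $\floor{(t-1)/2}=t/2-1$ of the old proposal is within distance $t/2$ of the new one). Hence $md_{2m}\ge md_{2m-1}+1$, and the even case follows from the odd bound $md_{2m-1}\ge m$.

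So let $t=2m+1$ and suppose for contradiction that $md_V<m+1=\ceil{(t+1)/2}$ for some $V$. By Lemma~\ref{lem:skl_inequality} with $w=m+1$, inequality~\eqref{eq:lp_ineq} holds for every $k\in\{m+1,\dots,2m+1\}$. I would multiply the $k$-th inequality by the positive weight $2k-t$, sum over $k$, and use $s_{k,t}=\binom tk$ on the right. The argument then rests on the identity
\[
    \sum_{k=m+1}^{2m+1}(2k-t)\,s_{k,l}=l\binom{2m}{m}\qquad\text{for all }l\in\{0,\dots,t\},
\]
which at $l=t$ also evaluates $\sum_{k=m+1}^{2m+1}(2k-t)\binom tk=t\binom{2m}{m}$. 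Granting it, the summed inequality becomes (after cancelling $\binom{2m}{m}>0$) $\sum_l l\,v_l\le t\floor{(n-1)/2}$; but $\sum_l l\,v_l$ is the number of Ys in $V$, which by the majority assumption is at least $t\ceil{n/2}=t\bigl(\floor{(n-1)/2}+1\bigr)>t\floor{(n-1)/2}$ — a contradiction, giving $md_t\ge\ceil{(t+1)/2}$.

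The main obstacle is the displayed identity. Fix an $l$-voter $v$; since $t=2m+1$, $v$ supports a proposal $p$ exactly when the Hamming distance $d(p,v)\le m$, so complementation $p\mapsto\bar p$ is a bijection between $\{p:d(p,v)\le m\}$ and $\{p:d(p,v)\ge m+1\}$. Writing $2|p|-t$ as the sum of $+1$ over the Ys and $-1$ over the Ns of $p$ and exchanging summation order, one gets $\sum_{p:\,d(p,v)\le m}(2|p|-t)=(2l-t)\binom{2m}{m}$: each of the $l$ coordinates on which $v$ says Y contributes $+\binom{2m}{m}$ and each of the $t-l$ on which it says N contributes $-\binom{2m}{m}$, by comparing Hamming balls of radius $m$ and $m-1$ in the other $t-1$ coordinates. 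Splitting this sum by $|p|\ge m+1$ versus $|p|\le m$ rewrites it as $S_+-S_-$ with $S_+=\sum_{k=m+1}^{2m+1}(2k-t)s_{k,l}$ (the target) and $S_-=\sum_{k=0}^{m}(t-2k)s_{k,l}$; applying the complementation bijection to the index set of $S_-$ shows $S_-=\sum_{|q|\ge m+1,\ d(q,v)\ge m+1}(2|q|-t)$, hence $S_++S_-=\sum_{|q|\ge m+1}(2|q|-t)=\sum_{k=m+1}^{2m+1}(2k-t)\binom tk$. Adding the two relations and evaluating this last constant (for instance via $(t-2j)\binom tj=t\bigl[\binom{t-1}{j}-\binom{t-1}{j-1}\bigr]$, which telescopes to $t\binom{2m}{m}$) yields $S_+=l\binom{2m}{m}$, as required.
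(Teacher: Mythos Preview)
Your proof is correct and follows the paper's overall architecture exactly: reduce even $t$ to odd $t$ by restriction to the first $t-1$ topics, and for odd $t$ take the linear combination of the inequalities from Lemma~\ref{lem:skl_inequality} with weights $2k-t$, reaching a contradiction with the Y-count in $V$. The one substantive difference is in how you establish the identity $\sum_{k=m+1}^{2m+1}(2k-t)s_{k,l}=l\binom{2m}{m}$.

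The paper proves this identity (Lemma~\ref{lem:bin_sum_app} in the Appendix) by inserting the explicit formula for $s_{k,l}$ from Lemma~\ref{lem:s_kl}, swapping summations twice, and reducing to Vandermonde's identity. Your argument is instead combinatorial: you interpret the weighted sum as $\sum_{p:\,d(p,v)\le m}(2|p|-t)$ split according to $|p|$, evaluate the full-ball sum coordinate by coordinate to get $(2l-t)\binom{2m}{m}$, and use the complementation bijection $p\mapsto\bar p$ to pair this with the elementary telescoping evaluation of $\sum_{k>m}(2k-t)\binom tk$. This is neater and more conceptual than the paper's calculation---it explains \emph{why} the answer is linear in $l$, via the per-coordinate contribution $\pm\binom{2m}{m}$---at the cost of being specific to odd $t$ (which is all that is needed here). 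The paper's computation, by contrast, is written for general $t$ and could in principle be reused for other weighted sums.
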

\begin{proof}
We first prove the statement for odd $t$ where it is equivalent to $md_t\geq \ceil{t/2}$.
To do so, we need to show that $md_V \geq \ceil{t/2}$ for all voter matrices $V$.
Assume there exists a voter matrix $V$ such that $md_V < \ceil{t/2}$.
Then inequality \eqref{eq:lp_ineq} holds for $k=\ceil{t/2},\ldots,t$.
For each of these $k$, we multiply the inequality with $2k-t$ and finally sum them all up.
This results in the inequality
\begin{equation}\label{eq:c_l}
    c_0v_0 + c_1v_1 +\ldots + c_tv_t \leq c_t \floor{\frac{n-1}{2}}
\end{equation}
with coefficients
\begin{equation*}
    c_l = \sum_{k = \ceil{t/2}}^{t} (2k-t) s_{k,l}
\end{equation*}
for $l=0,\ldots,t$.

Combinatorial calculations show that $c_l = l \binom{t-1}{\floor{t/2}}$.
For details, see Lemma \ref{lem:bin_sum_app} in the Appendix.
Inserting this into \eqref{eq:c_l} yields $\sum_{l=0}^t l v_l\leq t\floor{(n-1)/2}$. Note that the expression on the left side of the latter inequality represents the number of Ys in the matrix $V$. Since we assumed there is a majority of Ys in every column of the matrix, this number must be at least $nt/2$ which is a contradiction.

Finally, let $t$ be even. If we only consider the first $t-1$ topics, there exists a proposal with majority support with $r_{t-1}$ Ys.
Each voter that support this proposal will still support it when considering all $t$ topics no matter what the proposal suggests for topic $t$.
Using the statement for odd $t$, we conclude $md_t\geq 1+md_{t-1} \geq \ceil{(t+1)/2}$ for even $t$.
\end{proof}

\section{Best Representation}

In the previous section, we saw that only the existence of a proposal that decides slightly more than half of all topics in favor of a majority and has majority support can be guaranteed.
However, siding with a minority on a large number of topics is not actually that bad if the opposing majorities are very slim.
Note that this is the case for the voter matrices that provide the upper bound on $md_t$ in Lemma \ref{lem:md_t_upper_bound}.
On the other hand, siding with the minority on a few topics with very large majorities could be considered worse.
To quantify these differences of minority decisions, we consider the following representativeness metric.

For every $i=1,\ldots,t$, let $m_i\in[0.5,1]$ be the size of the majority on topic $i$, i.e.\ the fraction of Ys in the $i$-th column of the voter matrix.
Let $m_i'$ be the fraction of voters agreeing with a given proposal on topic $i$.
We have either $m_i'=m_i$ if the proposal supports the majority opinion on topic $i$, or $m_i'=1-m_i$ otherwise.
We call
\begin{equation*}
    R_p = \frac{m_1'+\ldots +m_t'}{t}
\end{equation*}
the \emph{absolute representativeness} of a proposal $p$.
Note that $R_p$ can also be interpreted as the fraction of all $nt$ voter opinions that match the proposal on that topic.
The total number of such matches can also be written as $nt -  \sum_{i=1}^n h(v_i,p)$ where $h(v_i,p)$ denotes the hamming distance between voter $i$ and proposal $p$. Thus,
\begin{equation*}
    R_p = 1 - \frac{1}{nt}\sum_{i=1}^n h(v_i,p).
\end{equation*}
So maximising $R_p$ is equivalent to minimizing the sum of hamming distances. The solution of this minimization is the topic-wise majority proposal and is sometimes referred to as minisum in approval voting \cite{kilgour10}.

The highest possible absolute representativeness of a proposal for a given voter matrix $V$ is $m_V:=(m_1+\ldots +m_t)/t$, i.e.\ the average majority on a topic, which is achieved by the topic-wise majority proposal.
Since this value differs from voter matrix to voter matrix, we consider the \emph{relative representation of $p$}:
\begin{equation*}
    r_p = \frac{R_p}{m_V} = \frac{m_1'+\ldots +m_t'}{m_1+\ldots +m_t}
\end{equation*}

Since the denominator of the fraction is constant for a given voter matrix, maximizing $r_p$ among all proposals still means maximizing the number of matching opinions between the proposal and the voters.
Finally, we denote $R_V=max_p R_p$ and $r_V=max_p r_p$ where in both cases the maximum is taken over all proposals $p$ supported by a majority.
In other words, $R_V$ and $r_V$ are the highest absolute and relative representativeness, respectively, of a proposal with majority support for a given $V$.

\subsection{Analytical Bounds}

Note that $R_p\geq 1/4$ holds for every proposal that is supported by a majority. This is clear, since any such proposal agrees with at least half of all voters on at least half of all topics, i.e. the number of matches is at least $nt/4$.

\sloppy For voter matrices with $m_V\leq 3/4$, this implies $r_p\geq (1/4) / (3/4) = 1/3$ and consequently $r_V\geq 1/3$. On the other hand, we have $r_V=1$ according to the Rule of Three-Fourth if $m_V\geq 3/4$, since the Anscombe paradox does not occur in that case \cite{wagner83}. Together this implies the following lemma.

\begin{lemma}\label{lem:r_V}
For any voter matrix $V$, $r_V\geq 1/3$.
\end{lemma}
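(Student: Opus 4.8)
The plan is to split into two cases according to whether the average majority $m_V = (m_1+\ldots+m_t)/t$ is at most $3/4$ or at least $3/4$, and to handle each with ingredients that are already available.

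\textbf{Case $m_V\geq 3/4$.} Here I invoke the Rule of Three-Fourth \cite{wagner83}: since $m_V\geq 3/4$, the Anscombe paradox does not occur, so the topic-wise majority proposal itself has majority support. That proposal attains absolute representativeness $m_V$, hence relative representativeness $r_p=1$, so $r_V=1\geq 1/3$.

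\textbf{Case $m_V\leq 3/4$.} First I establish the uniform bound $R_p\geq 1/4$ for every proposal $p$ with majority support. A voter supports $p$ exactly when the hamming distance to $p$ is at most $t/2$, i.e.\ when the voter agrees with $p$ on at least $\ceil{t/2}\geq t/2$ topics. If $p$ has majority support, at least $n/2$ voters agree with $p$ on at least $t/2$ topics each, so the total number of matches is at least $(n/2)(t/2)=nt/4$, giving $R_p\geq 1/4$. Next I note that a majority-supported proposal always exists: for an arbitrary proposal $p_0$, every voter $v$ satisfies $h(v,p_0)+h(v,\bar p_0)=t$, so $v$ is within distance $t/2$ of $p_0$ or of $\bar p_0$; hence one of $p_0,\bar p_0$ is supported by at least half the voters. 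For such a proposal, $r_p = R_p/m_V \geq (1/4)/(3/4) = 1/3$, and therefore $r_V\geq 1/3$.

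Combining the two cases yields $r_V\geq 1/3$ for every voter matrix $V$. There is essentially no real obstacle: the lemma follows by assembling the $nt/4$ counting bound, the trivial existence of a majority-supported proposal, and the external Rule of Three-Fourth. The only point needing a little care is the passage from the definition of support to $R_p\geq 1/4$ when $t$ is odd, but $h(v,p)\leq t/2$ is equivalent to $h(v,p)\leq\floor{t/2}$, so the number of agreements is at least $\ceil{t/2}\geq t/2$ and the bound goes through unchanged.
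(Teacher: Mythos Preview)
Your proposal is correct and mirrors the paper's own argument exactly: split on whether $m_V$ is above or below $3/4$, invoke the Rule of Three-Fourth in the first case, and in the second case combine the counting bound $R_p\geq 1/4$ (from ``at least half the voters agree on at least half the topics'') with $r_p=R_p/m_V\geq (1/4)/(3/4)$. The only additions you make---the explicit existence of a majority-supported proposal via $p_0$ or $\bar p_0$, and the parity remark for odd $t$---are details the paper leaves implicit rather than handles differently.
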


In order to establish a better lower bound on $r_V$, we consider a proposal with representativeness of about 0.5 and choose either this proposal or its opposite, depending on which is supported by a majority.

\begin{lemma}\label{lem:1/2prop}
For any voter matrix $V$, we can find a proposal $p$ with $R_p\geq \frac{1}{2}-\frac{1}{t}$.
In particular, this implies $r_V \geq \frac{2}{3} - \frac{4}{3t}$.
\end{lemma}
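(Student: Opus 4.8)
The plan is to exhibit a single proposal $p$ that simultaneously has majority support and absolute representativeness at least $\frac{1}{2}-\frac{1}{t}$, and then to convert this into the claimed bound on $r_V$ by a short case distinction on $m_V$.

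The starting observation is that for \emph{any} proposal $q$ we have $R_q + R_{\bar q} = 1$: on each topic one of $q,\bar q$ agrees with the majority and the other with the minority, so the two agreement fractions sum to $1$. Moreover, at least one of $q$ and $\bar q$ is supported by $V$, for the same reason used throughout the paper: a voter $v$ fails to support $q$ only when $h(v,q) > t/2$, and then $h(v,\bar q) = t - h(v,q) < t/2$, so $v$ supports $\bar q$; hence the supporter counts of $q$ and of $\bar q$ add up to at least $n$, and one of the two counts is at least $n/2$. Consequently, if we can find a proposal $q$ whose representativeness lies in the window $[\frac{1}{2},\frac{1}{2}+\frac{1}{t}]$, then $R_{\bar q} = 1 - R_q$ lies in $[\frac{1}{2}-\frac{1}{t},\frac{1}{2}]$, and whichever of $q,\bar q$ has majority support is a proposal $p$ with majority support and $R_p \geq \frac{1}{2}-\frac{1}{t}$.

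To produce such a $q$, I would interpolate between the all-Y proposal and the all-N proposal by flipping the topics one at a time in any fixed order. The all-Y proposal has $R = m_V \geq \frac{1}{2}$ and the all-N proposal has $R = 1-m_V \leq \frac{1}{2}$, and flipping topic $i$ from Y to N changes $R$ by exactly $-\frac{2m_i-1}{t}$, a non-positive quantity of absolute value at most $\frac{1}{t}$ since $m_i\in[\tfrac12,1]$. Thus along this sequence $R$ decreases monotonically from a value $\geq \frac12$ to a value $\leq \frac12$ in steps of size at most $\frac1t$, so taking $q$ to be the last proposal in the sequence with $R_q\geq \frac12$ gives $R_q \in [\frac12,\frac12+\frac1t]$, exactly as needed above. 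Finally, to get the bound on $r_V$: the proposal $p$ found has majority support, so $r_V \geq r_p = R_p/m_V \geq (\tfrac12-\tfrac1t)/m_V$; if $m_V \leq \frac34$ this is at least $(\tfrac12-\tfrac1t)/\tfrac34 = \tfrac23-\tfrac{4}{3t}$, while if $m_V \geq \frac34$ the Rule of Three-Fourth guarantees the topic-wise majority proposal has majority support, so $r_V = 1 \geq \tfrac23-\tfrac{4}{3t}$ (the same case split as in Lemma~\ref{lem:r_V}). Either way $r_V \geq \tfrac23-\tfrac{4}{3t}$.

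I expect the only genuine subtlety to be the interpolation step: one must check that each single flip moves $R$ by at most $\frac1t$ and that the two endpoints of the sequence straddle $\frac12$, so that some intermediate proposal is forced to land in the required window $[\frac12,\frac12+\frac1t]$. The remaining ingredients — the identity $R_q+R_{\bar q}=1$, the dichotomy that one of $q,\bar q$ always has majority support, and the final case distinction on $m_V$ — are routine.
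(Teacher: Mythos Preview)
Your proposal is correct and follows essentially the same approach as the paper: both interpolate from the all-Y proposal to the all-N proposal by flipping one topic at a time, use that each flip changes $R$ by at most $1/t$ to land near $1/2$, invoke that one of a proposal and its opposite must have majority support, and finish with the same case split on $m_V\lessgtr 3/4$ via the Rule of Three-Fourth. The only cosmetic difference is that the paper selects the last proposal with $R\leq 1/2$ while you select the last with $R\geq 1/2$, which yields the same pair $\{q,\bar q\}$ up to relabeling.
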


\begin{proof}
Let $m_1,\ldots,m_t$ be the sizes of the topic-wise majorities.
We choose the maximum $k\in [0, t]$ such that $m_1+\ldots+m_k + (1-m_{k+1}) +\ldots +(1-m_t) \leq t/2$.
Note that this term is non-decreasing as a function of $k$ since $m_i\geq 1-m_i$ for $i=1,\ldots,t$.

Consider the proposal $p=\text{Y}\ldots \text{Y}\text{N}\ldots \text{N}$ consisting of $k$ Ys followed by $n-k$ Ns.
By our choice of $k$, $R_p\leq 1/2$ and therefore $R_{\bar{p}} =1-R_p \geq 1/2$ for the opposite proposal $\bar{p}$.
Furthermore, by the maximality of $k$,
\begin{align*}
    R_p &= \frac{1}{t}\left(m_1+\ldots+m_k + (1-m_{k+1}) +\ldots +(1-m_t) \right)\\
    &= \frac{1}{t}(m_1+\ldots+m_{k+1} + (1-m_{k+2}) +\ldots  +(1-m_t) \\
    &\qquad\qquad\qquad\qquad\qquad\qquad\qquad- m_{k+1} + (1-m_{k+1}))\\
    &> \frac{1}{t}\left( \frac{t}{2} + 1 - 2m_{k+1}\right) \geq \frac{1}{2}-\frac{1}{t}.
\end{align*}
Since either $p$ or $\bar{p}$ is supported by $V$, this implies $R_V\geq 1/2-1/t$.

By the Rule of Three-Fourth, we know that $r_V=1$ if $m_V\geq 3/4$ since the Anscombe paradox does not occur is that case.
Otherwise, if $m_V\leq 3/4$, we have
\begin{equation*}
    r_V \geq \frac{\frac{1}{2}-\frac{1}{t}}{\frac{3}{4}} = \frac{2}{3} - \frac{4}{3t}.
\end{equation*}
\end{proof}

\begin{definition}
Let $r_t = \min_{V\in \mathcal{V}_t} r_V$.
\end{definition}

Intuitively, $r_t$ can be understood as follows.
For any voter matrix with $t$ topics, there exists a proposal with representativeness $r_t$ that is supported by a majority of voters.
For instance, the fact $r_t=0.8$ can be interpreted as the guarantee of the existence of a proposal which is $80\%$ as representative as the optimal proposal and has the support of a majority.
Note that $r_t\in[0,1]$ since $r_V\in[0,1]$. Moreover, the following corollary is a consequence of Lemma \ref{lem:r_V}.

\begin{corollary}
It holds $r_t\geq 1/3$.
\end{corollary}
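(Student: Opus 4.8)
The statement is an immediate consequence of what has already been established, so the proof will be essentially one line. The plan is to recall the definition $r_t = \min_{V\in\mathcal{V}_t} r_V$ and then invoke Lemma~\ref{lem:r_V}, which asserts $r_V \geq 1/3$ for \emph{every} voter matrix $V\in\mathcal{V}_t$. Since every term in the minimum is bounded below by $1/3$, the minimum itself satisfies $r_t \geq 1/3$. No case analysis, induction, or combinatorial estimate is needed at this stage.

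The only real content sits upstream in Lemma~\ref{lem:r_V}, whose proof I would (and the paper does) organize around the dichotomy on $m_V$: if $m_V \geq 3/4$ the Rule of Three-Fourth~\cite{wagner83} rules out the Anscombe paradox, so the topic-wise majority proposal itself has majority support and $r_V = 1$; if $m_V \leq 3/4$, then any majority-supported proposal has at least $nt/4$ matches (it agrees with at least half the voters on at least half the topics), giving $R_p \geq 1/4$ and hence $r_p = R_p/m_V \geq (1/4)/(3/4) = 1/3$. Since such a majority-supported proposal always exists (take any proposal or its opposite), $r_V \geq 1/3$ in this case too. There is no genuine obstacle in deriving the corollary — the effort is entirely in the lemma it cites, and in particular in the clean $1/4$ lower bound on the match count of any majority-supported proposal.
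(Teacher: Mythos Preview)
Your proof is correct and matches the paper's approach exactly: the corollary is stated immediately after Lemma~\ref{lem:r_V} with the remark that it is a consequence of that lemma, and your one-line argument from the definition $r_t=\min_V r_V$ is precisely what is intended. Your summary of the upstream reasoning behind Lemma~\ref{lem:r_V} (the $m_V \gtrless 3/4$ dichotomy combined with the $R_p\geq 1/4$ bound) also agrees with the paper's discussion preceding that lemma.
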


In the following, our goal is to calculate $r_t$.
It is clear that $r_1=r_2=1$ since the topic-wise majority proposal is always supported by a majority for those $t$ (because $md_1=1$ and $md_2=2$).

\begin{theorem}\label{thm:r_3}
For three topics, we have $r_3 = 5/6$.
\end{theorem}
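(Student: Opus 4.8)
The plan is to establish the two inequalities $r_3 \le 5/6$ and $r_3 \ge 5/6$ separately. For the upper bound, I would exhibit a single voter matrix $V$ on $t=3$ topics with $r_V = 5/6$. A natural candidate is a small matrix built from the three $1$-voters (YNN, NYN, NNY) together with enough all-Y voters so that the topic-wise majority YYY is \emph{not} supported by a majority, yet the majorities $m_1, m_2, m_3$ are as large as possible (pushing $m_V$ up forces $r_V$ down). Concretely, one can take a few copies of each $1$-voter and a matching number of all-Y voters: YYY achieves $R_p = m_V$ but has hamming distance $2$ to each $1$-voter, so it loses their support; meanwhile a $2$-proposal like YYN has majority support, and computing $R_{\text{YYN}}/m_V$ should give exactly $5/6$. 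I would verify by checking all $8$ proposals that none supported by a majority beats this ratio.

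For the lower bound $r_V \ge 5/6$ for \emph{every} voter matrix on $3$ topics, I would argue as follows. If $m_V \ge 3/4$ then the Rule of Three-Fourth \cite{wagner83} gives $r_V = 1$, so assume $m_V < 3/4$, i.e.\ $m_1 + m_2 + m_3 < 9/4$. Order the topics so that $m_1 \ge m_2 \ge m_3 \ge 1/2$. The topic-wise majority proposal is YYY; if it has majority support we are done. Otherwise, by Anscombe, more than half the voters have hamming distance $\ge 2$ to YYY, i.e.\ are $0$- or $1$-voters; so $v_0 + v_1 > n/2$. Now consider the three $2$-proposals YYN, YNY, NYY. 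Proposal YYN has representativeness $(m_1 + m_2 + (1-m_3))/3$; I would show that at least one of the three $2$-proposals both (a) is supported by a majority and (b) has $R_p \ge \frac{5}{6}\, m_V$, equivalently $m_1 + m_2 + (1-m_3) \ge \frac{5}{6}(m_1 + m_2 + m_3)$ for the best choice, which rearranges to a linear inequality in the $m_i$ that follows from $m_i \le 1$ and $\sum m_i < 9/4$. The support claim is the crux: I would use the structure imposed by $v_0 + v_1 > n/2$ (together with the column-majority constraints) to show that some $2$-proposal is supported — intuitively, the $1$-voters and $0$-voters, being "close to NNN", tend to support $2$-proposals, and a counting/averaging argument over the three $2$-proposals using the $s_{k,l}$ values from Lemma \ref{lem:s_kl} should force one of them over the threshold.

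The main obstacle I anticipate is the lower bound's support argument: it is not enough that some $2$-proposal has high representativeness — it must \emph{simultaneously} have majority support, and the proposal with the highest $R_p$ (namely YYN, siding with the two biggest majorities) is exactly the one the all-Y-leaning voters are least likely to support. So I expect the proof to require a careful case split on how the voters distribute among the $k$-voter types, possibly showing that when YYN lacks majority support one can fall back to YNY or NYY (or even a $1$-proposal) while still keeping the ratio at $5/6$; balancing these cases against the constraint $\sum m_i < 9/4$ is where the $5/6$ constant should emerge tightly. A clean way to organize this is as a small linear program in the variables $v_0, \ldots, v_3$ (or the $m_i$), mirroring the LP-style argument used for $md_t$, and then solve it exactly.
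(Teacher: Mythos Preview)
Your upper-bound construction is the paper's: $l$ copies of each $1$-voter together with $3l-1$ copies of YYY give $r_W\to 5/6$ as $l\to\infty$.

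The lower-bound plan has two genuine gaps. First, the inequality $m_1+m_2+(1-m_3)\ge\frac{5}{6}(m_1+m_2+m_3)$ does \emph{not} follow from $m_i\le 1$ and $\sum m_i<9/4$: at $m_1=m_2=m_3=\tfrac{3}{4}-\varepsilon$ the ratio is only $7/9$. What you actually need is $m_3\le 2/3$. That bound \emph{is} obtainable from your own observation $v_0+v_1>n/2$ (it forces the total number of Ys below $2n$, hence $m_V<2/3$, hence $m_3\le m_V<2/3$), but you never make that deduction; the paper splits on $m_3\gtrless 2/3$ directly rather than on the Three-Fourth threshold precisely for this reason.

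Second, and more seriously, your support argument does not land on the right $2$-proposal. Averaging via $s_{k,l}$ can at best show that \emph{some} $2$-proposal is supported, but if that one happens to be NYY its ratio $(1-m_1+m_2+m_3)/(m_1+m_2+m_3)$ can be as low as $1/2$ (take $m_1=1$, $m_2=m_3=1/2$). Your fallback to YNY does not rescue this, since the analogous inequality for YNY needs $m_2\le 2/3$, which ``YYY unsupported'' does not imply. (Incidentally, $0$-voters support \emph{no} $2$-proposal, contrary to your intuition.) The paper's key step, which your plan is missing, is the identity
\[
|\mathrm{supp}(\text{YYY})|+|\mathrm{supp}(\text{YYN})| \;=\; \#\{\text{Ys in columns }1\text{ and }2\}\;\ge\; n,
\]
seen by noting that a voter with YY in the first two columns supports both proposals, one with YN or NY supports exactly one of them (depending on column~3), and one with NN supports neither. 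Hence one of YYY, YYN already has at least $n/2$ supporters; if it is YYN, combine with $m_3\le 2/3$ to get $r_V\ge 5/6$. No LP or further case split is needed.
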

\begin{proof}
Consider the following matrix $W$.
\begin{figure}[h]
\centering
\begin{tikzpicture}[decoration={brace,amplitude=5pt}]
    \matrix (m) [matrix of nodes, column sep = 4.5pt, row sep = -2pt] {
    Y & N & N\\
    N & Y & N\\
    N & N & Y\\
    Y & Y & Y\\
    };
    \draw[decorate,thick] ([yshift=-2pt] m-1-3.north east) -- node[right=5pt] {$3l$} ([yshift=2pt] m-3-3.south east);
    \draw[decorate,thick] ([yshift=-2pt] m-4-3.north east) -- node[right=5pt] {$3l-1$} ([yshift=2pt] m-4-3.south east);
\end{tikzpicture}
\end{figure}

The proposal YYY is not supported by the first $3l$ voters and thereby does not have majority support.
Since any 2-proposals is supported by a majority,
\begin{equation*}
    r_W = \frac{(4l-1)+(4l-1)+2l}{(4l-1)+(4l-1)+(4l-1)} \to \frac{5}{6}\quad\text{as}\quad l \to \infty.
\end{equation*}
This means $r_3\leq 5/6$.
It remains to prove $r_V\geq 5/6$ for all voter matrices $V$ with 3 topics.
Let $m_1\geq m_2 \geq m_3$ be the majorities on the three topics.

If $m_3\geq 2/3$, we prove by contradiction that the proposal YYY is supported which implies $r_V=1$.
Assume YYY is not supported by $V$, i.e.\ at least $\ceil{n/2}$ voters do not support it. The vector of each of these voters contains at most one Y.
Therefore the fraction of Ys in the voter matrix is at most
\begin{equation*}
    \frac{3\floor{\frac{n}{2}} + \ceil{\frac{n}{2}}}{3n} < \frac{2}{3}.
\end{equation*}
On the other hand, the fraction of Ys in the matrix can also be written as $(m_1+m_2+m_3)/3$. Hence, the assumption $m_3\geq 2/3$ implies that this fraction is at least $2/3$ which is a contradiction.

Otherwise, if $m_3\leq 2/3$, consider the sum of supporters of the proposals YYY and YYN.
A voter with the preferences YN or NY on the first two topics will support exactly one of these two proposals depending on their opinion on the third topic.
Hence, such a voters will contribute exactly 1 to the sum.
Similarly, voters with the preferences NN or YY will contribute exactly 0 and 2, respectively.
In particular, the sum equals the number of Ys in the preferences of the voters on the first two topics.
Since there are at least $n/2+n/2=n$ such Ys, the total support for YYY and YYN is at least $n$ and one of them must be supported by the majority. This implies
\begin{equation*}
    r_V \geq \frac{m_1 + m_2 + (1-m_3)}{m_1+m_2+m_3}\geq \frac{\frac{2}{3}+\frac{2}{3}+\frac{1}{3}}{\frac{2}{3}+\frac{2}{3}+\frac{2}{3}}=\frac{5}{6}.
\end{equation*}
\end{proof}

For general $t\in\mathbb{N}$, we use a generalization of the voter matrix with 3 topics used in the previous proof to find an upper bound on $r_t$.
\begin{theorem}\label{thm:r_t_general}
We have $\lim_{t\to\infty}r_t\leq 2\sqrt{6}-4 \approx 0.8989$.
\end{theorem}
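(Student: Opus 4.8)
The plan is to generalize the three-topic construction from Theorem~\ref{thm:r_3}: take a voter matrix $W$ consisting of many copies of the "spread-out" voters (each a $1$-voter $e_i$ having a single Y, so no $k$-proposal with $k$ large is supported by them) together with a controlled number of all-Y voters, and optimize the mixing ratio. Concretely, I would let $W$ have $\alpha n$ voters of each type $e_1,\dots,e_t$ (scaled so these are roughly $\alpha n t$ voters, i.e.\ a fraction of the electorate) and $(1-\alpha t)n$ all-Y voters, choosing $\alpha$ so that the all-Y proposal just barely fails to have majority support while keeping the topic-wise majority large. The topic-wise majority on each topic is Y, with majority size $m_i = (\text{Ys in column }i)/n$; with this construction every column has the same count, so $m_V = m_i$ is a single value determined by $\alpha$. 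The key computation is: (i) find which proposals \emph{do} have majority support, (ii) among those compute the maximum of $R_p$, and (iii) form the ratio $r_W = R_V/m_V$ and optimize over $\alpha$.

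For step (i): a $1$-voter $e_i$ supports a $k$-proposal iff the proposal has $\le \lfloor t/2\rfloor$ disagreements with $e_i$; if the proposal is a $k$-proposal containing topic $i$'s Y, the Hamming distance is $k-1 + \underbrace{0}_{} $ on the N-side... more carefully, $h(e_i,p) = (k - [p_i = Y]) + [p_i = N]$, so $e_i$ supports $p$ iff either $p_i=Y$ and $k-1\le \lfloor t/2\rfloor$, or $p_i = N$ and $k+1 \le \lfloor t/2\rfloor$. Thus for $k$ around $t/2$, a $1$-voter $e_i$ supports a $k$-proposal essentially only when $p_i = Y$. This means a $k$-proposal $p$ is supported by $k\cdot \alpha n$ of the spread voters (those $e_i$ with $p_i = Y$) plus all $(1-\alpha t)n$ all-Y voters; to have majority support we need $k\alpha n + (1-\alpha t)n \ge n/2$, i.e.\ $k\alpha \ge \alpha t - 1/2$. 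So the best supported proposal takes $k$ as large as this allows, namely $k^\ast = \lceil t - \frac{1}{2\alpha}\rceil$ (provided this is $\le t$), and for that proposal $R_{p} = \frac{1}{t}\big(k^\ast m + (t-k^\ast)(1-m)\big)$ where $m = m_V$ is the common column majority.

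For step (iii): I would express $m_V$ in terms of $\alpha$ — the number of Ys in column $i$ is $\alpha n$ (from $e_i$) plus $(1-\alpha t)n$ (from all-Y voters), so $m = 1 - \alpha(t-1)$, and we need this $\ge 1/2$, i.e.\ $\alpha \le \frac{1}{2(t-1)}$. Plugging $k^\ast \approx t - \frac{1}{2\alpha}$ and $m = 1-\alpha(t-1)$ into $r_W = R_{p^\ast}/m$, one gets a one-variable function of $\alpha$ (in the $t\to\infty$ limit, a function of the scaled variable $\beta = \alpha t$ say). I expect the optimization to yield a critical point where $r_W$ is minimized, and the clean closed-form answer $2\sqrt 6 - 4$ should drop out of setting the derivative to zero — the appearance of $\sqrt 6$ strongly suggests a quadratic arises from balancing a term linear in $\beta$ against a reciprocal term $1/\beta$, as in minimizing something like $\frac{a - b/\beta}{c - d\beta}$.

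The main obstacle I anticipate is handling the floors/ceilings and the discreteness of $k^\ast$ cleanly, and verifying that no \emph{other} proposal — one that is not of the simple "$k$ leading Ys" form, or one supported partly by spread voters via the $p_i = N$ case when $t$ is small relative to the gap — can do better than $p^\ast$; this requires arguing that among all proposals with majority support, the symmetric $k^\ast$-proposal is optimal, which follows because all topics are interchangeable in $W$ and $R_p$ depends only on how many Ys $p$ has. The second subtlety is the passage to the limit: the bound $r_t \le r_W$ holds for each finite $t$ with the optimal integer $\alpha$, and only in the limit $t\to\infty$ does the continuous optimum $2\sqrt6-4$ become attainable, so the statement is correctly phrased as $\lim_{t\to\infty} r_t \le 2\sqrt6 - 4$ rather than $r_t \le 2\sqrt 6 - 4$ for all $t$. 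I would close by remarking that the convergence is from below (the discrete constructions approach but do not reach the bound), consistent with the paper's later numerical observation that the true value is around $0.79$.
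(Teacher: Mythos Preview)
Your construction has a real gap. You correctly derive that a $1$-voter $e_i$ supports a $k$-proposal with $p_i=\text{Y}$ only when $k-1\le\lfloor t/2\rfloor$, but you then drop this constraint when you claim the best supported proposal has $k^\ast=\lceil t-\tfrac{1}{2\alpha}\rceil$. The inequality $k\alpha+(1-\alpha t)\ge 1/2$ bounds $k$ from \emph{below}, not above; what actually caps $k$ is the support condition for the $e_i$'s, and that forces $k^\ast\le\lfloor t/2\rfloor+1$. Plugging this true $k^\ast\approx t/2$ into your formula for $r_W$ together with the admissible range $\tfrac{1}{2t}<\alpha\le\tfrac{1}{t}$ (equivalently $m\approx\tfrac12$) gives $r_W\to 1$ as $t\to\infty$, so the $1$-voter construction yields no nontrivial upper bound at all.

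The missing idea is that the ``spread'' voters must themselves carry many Y's --- a tunable number $x$, not just one --- so that they still support proposals with up to roughly $x+\lceil t/2\rceil$ Y's. The paper's construction does exactly this: the top $k+1$ voters each have about $x=\frac{(M-k)t}{k+1}$ Y's (arranged so every column has the same majority $M/n$), and the bottom $k$ are all-Y. One then optimizes simultaneously over the majority size $m=M/n$ and the per-voter load $x$ (equivalently over the ratio $M/k$); the optimum occurs at $M/k=\sqrt{3/2}$, i.e.\ $m\approx 0.61$ and $x\approx 0.22\,t$, which is what produces $2\sqrt{6}-4$. Your scaled variable $\beta=\alpha t$ is not enough by itself: there is a genuinely two-parameter trade-off between how large the column majorities are and how far past $t/2$ the top block can be pushed before it withdraws support.
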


\begin{proof}
Consider a voter matrix with $n=2k+1$ voters where each column contains the same number $M\in[k+1,n]$ of Ys, i.e.\ the majorities on all topics are equally large.
Let the bottom $k$ voters in the matrix each have opinion vectors consisting of $t$ Ys.
The remaining $(M-k)t$ Ys are distributed among the top $k+1$ voters such that all columns contain the same number of Ys, and that the number of Ys of any two of these voters does not differ by more than one.
This means, each of the top $k+1$ voters has at most $x=\ceil{\frac{(M-k)t}{k+1}}$ Ys.
Below such a matrix is shown for $t=5$, $k=4$ and $M=6$.
\begin{figure}[h]
\centering
\begin{tikzpicture}[decoration={brace,amplitude=5pt}]
    \matrix (m) [matrix of nodes, column sep = 4.5pt, row sep = -2pt] {
    Y & N & Y & N & N\\
    Y & N & N & Y & N\\
    N & Y & N & Y & N\\
    N & Y & N & N & Y\\
    N & N & Y & N & Y\\
    Y & Y & Y & Y & Y\\
    Y & Y & Y & Y & Y\\
    Y & Y & Y & Y & Y\\
    Y & Y & Y & Y & Y\\
    };
\end{tikzpicture}
\end{figure}

The larger $M$ is, the more majority opinions a proposal with majority support can agree with, but the worse it is when it disagrees with a majority opinion. In the following, we determine $M$ such that $r_V$ is minimal.

No proposal with $x+\ceil{(t+1)/2}$ or more Ys is supported by any of the top $k+1$ voters leaving it without a majority.
Therefore,
\begin{align*}
    r_V &\leq \frac{\left(x+\ceil{\frac{t+1}{2}}-1\right)M + \left(\floor{\frac{t+1}{2}}-x\right)(n-M)}{tM} \\
    &= 1 - \frac{\left(\floor{\frac{t+1}{2}}-x\right)(2M-n)}{tM}.
\end{align*}
For odd $t$, we have $\floor{(t+1)/2}=t/2+1/2$, and conclude
\begin{align*}
    1-r_V &\geq \frac{\left(\frac{t}{2}+\frac{1}{2}-\left(\frac{(M-k)t}{k+1}+1\right)\right)(2M-n)}{tM}\\
    &= \left(\frac{1}{2}-\frac{M-k}{k+1}\right)\left(2-\frac{2k+1}{M}\right) - \frac{1}{2t}\left(2-\frac{2k+1}{M}\right).
\end{align*}
By choosing $k$ and $M$ sufficiently large for fixed $M/k$, this term comes arbitrarily close to
\begin{equation*}
    \left(\frac{1}{2}-\frac{M-k}{k}\right)\left(2-\frac{2k}{M}\right) - \frac{1}{2t}\left(2-\frac{2k}{M}\right) = 5 - 2\frac{M}{k} - 3\frac{k}{M} - \frac{1}{t}\left(1-\frac{k}{M}\right).
\end{equation*}
Choosing $M/k=\sqrt{3/2}$ maximizes the part independent of $t$, and yields
\begin{equation*}
    r_V\leq 2\sqrt{6}-4+\frac{1}{t}\left(1-\sqrt{\frac{2}{3}}\right).
\end{equation*}
For even $t$, we have $\floor{(t+1)/2}=t/2$, and can use the same steps as above to arrive at
\begin{equation*}
    r_V\leq 2\sqrt{6}-4+\frac{2}{t}\left(1-\sqrt{\frac{2}{3}}\right).
\end{equation*}

\end{proof}

\subsection{Numerical Bounds}

To find better lower and upper bounds on $r_t$ we consider the following related problem.
The lower bound for instance can be improved by showing that for voter matrices with a high average majority $m_V$, i.e.\ a large fraction of Ys, a certain number of majority opinions can always be guaranteed.
Or we can pose the question the other way around.
How large does the average majority on the topics need to be such that the existence of a proposal with majority support that agrees with a certain number of majority opinions can be guaranteed?

\begin{definition}
For $t\in\mathbb{N}$ and $w\geq \ceil{\frac{t+1}{2}}$, let $ma_t(w)$ be the maximum average majority in a voter matrix $V\in \mathcal{V}_t$ with $md_V< w$.
\end{definition}

In other words, we are looking for the minimum number $ma_t(w)$, such that for all voter matrices with $m_V> ma_t(w)$, there exists a $w$-proposal that is supported by a majority.

Finding the values of $ma_t(w)$ is an interesting problem in itself. For instance, they are a natural extension of the Rule of Three-Forth in the sense that the rule can be derived from the value of $ma_t(t)$.
More precisely, the rule follows from the fact that $ma_t(t)<3/4$. This in turn follows from the following lemma.

\begin{lemma}\label{lem:ma_t(t)}
It holds
\begin{equation*}
    ma_t(t) = \frac{1}{2} + \frac{\floor{\frac{t-1}{2}}}{2t}.
\end{equation*}
\end{lemma}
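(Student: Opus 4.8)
The plan is to prove the two inequalities $ma_t(t) \le \frac12 + \frac{\floor{(t-1)/2}}{2t}$ and $ma_t(t) \ge \frac12 + \frac{\floor{(t-1)/2}}{2t}$ separately. For the upper bound I would argue that if a voter matrix $V$ has average majority $m_V$ strictly larger than the claimed value, then the all-Y proposal (which has $t$ majority decisions) must be supported by a majority, i.e.\ $md_V = t$, contradicting $md_V < t = w$. Concretely, suppose the all-Y proposal is \emph{not} supported; then at least $\ceil{n/2}$ voters have Hamming distance $> t/2$ from it, i.e.\ each such voter has at most $\floor{(t-1)/2}$ Ys in their row, while the remaining $\floor{n/2}$ voters have at most $t$ Ys each. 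Counting Ys, the fraction of Ys in the matrix is at most
\begin{equation*}
    \frac{t\floor{\frac{n}{2}} + \floor{\frac{t-1}{2}}\ceil{\frac{n}{2}}}{tn},
\end{equation*}
and taking $n$ large (or noting the worst case is $n$ odd) this is at most $\frac12 + \frac{\floor{(t-1)/2}}{2t}$. Since the fraction of Ys equals $m_V$, this contradicts $m_V$ exceeding that bound. Hence $ma_t(t)$, the supremum of $m_V$ over matrices with $md_V < t$, is at most $\frac12 + \frac{\floor{(t-1)/2}}{2t}$.

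For the lower bound I need to exhibit, for every $\epsilon > 0$, a voter matrix $V$ with $md_V < t$ and $m_V$ arbitrarily close to $\frac12 + \frac{\floor{(t-1)/2}}{2t}$. The natural candidate is the matrix that makes the above counting argument tight: take $n = 2k+1$ voters, let $k$ of them (say the bottom ones) be all-Y, and let the other $k+1$ voters each have exactly $\floor{(t-1)/2}$ Ys, distributed as evenly as possible across the columns so that every column still has a strict majority of Ys (this is possible since $\floor{(t-1)/2} \ge t/2 - 1$ spread over $k+1$ rows, plus $k$ full rows, keeps each column's Y-count at least $k+1 > n/2$ for $k$ large). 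For this matrix the all-Y proposal is unsupported because each of the $k+1$ "sparse" voters is at Hamming distance $t - \floor{(t-1)/2} = \ceil{(t+1)/2} > t/2$ from it. One must then check that no \emph{other} $t$-proposal exists — but the only $t$-proposal is the all-Y proposal, so $md_V < t$ automatically once all-Y is unsupported. The average majority of this matrix is $\frac{kt + (k+1)\floor{(t-1)/2}}{(2k+1)t}$, which tends to $\frac12 + \frac{\floor{(t-1)/2}}{2t}$ as $k \to \infty$.

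The main obstacle I anticipate is the even-distribution feasibility argument in the lower bound: I need the $(k+1)\floor{(t-1)/2}$ Ys among the sparse voters to be arrangeable so that every one of the $t$ columns receives the same (or nearly the same) number, and so that combined with the $k$ all-Y rows each column is a genuine majority column (and ideally the majorities are all essentially equal, so that the $m_V$ computation is clean). This is a routine balanced-allocation / double-counting check — the total count $(k+1)\floor{(t-1)/2}$ divided by $t$ columns gives the per-column load, and for $k$ large the rounding is negligible — but it has to be stated carefully since the problem's convention requires strict majorities in every column. A secondary subtlety is handling the parity of $t$: when $t$ is odd, $\floor{(t-1)/2} = (t-1)/2$ and the Hamming-distance threshold $t/2$ is half-integral, whereas for even $t$, $\floor{(t-1)/2} = t/2 - 1$ and a voter with exactly $t/2$ Ys would still \emph{support} the all-Y proposal, so the sparse voters must have at most $t/2 - 1$ Ys — which is exactly $\floor{(t-1)/2}$, so the formula is uniform, but the inequalities should be checked against both cases.
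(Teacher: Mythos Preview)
Your proof is correct and follows essentially the same approach as the paper: the upper bound is the same Y-counting argument, and the lower bound uses the same ``half all-Y voters, half sparse voters'' construction (the paper defers this to the more general Lemma~\ref{lem:m_lower}). The only execution difference is that the paper places all sparse Ys in the first $\floor{(t-1)/2}$ columns and implicitly works with even $n$ (so the remaining columns are tied and Y is the majority by convention), which sidesteps the balanced-allocation check you flagged; your odd-$n$ version with evenly spread Ys works too, but the even-$n$ variant is cleaner.
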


\begin{proof}
\sloppy Any voter matrix containing more than $\floor{n/2}t + \ceil{n/2}\floor{(t-1)/2}$ Ys contains at least $\ceil{n/2}$ voters with at least $\floor{(t-1)/2}+1=\ceil{t/2}$ Ys, implying that the topic-wise majority proposal is supported by a majority. Hence,
\begin{equation*}
    ma_t(t)\leq \sup_{1\leq n\leq \infty}\left( \frac{\floor{\frac{n}{2}}t + \ceil{\frac{n}{2}}\floor{\frac{t-1}{2}}}{nt}\right)
    = \frac{1}{2} + \frac{\floor{\frac{t-1}{2}}}{2t}.
\end{equation*}
The matching lower bound on $ma_t(t)$ is a consequence of the following lemma.
\end{proof}

\begin{lemma}\label{lem:m_lower}
For $w\geq \ceil{(t+1)/2}$,
\begin{equation*}
    ma_t(w) \geq \frac{w}{2t} + \frac{\floor{\frac{t-1}{2}}}{2t}.
\end{equation*}
\end{lemma}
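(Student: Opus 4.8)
The plan is to prove the bound by construction: for each $w$ in the stated range I will exhibit a sequence of voter matrices $V$ with $md_V<w$ whose average majority $m_V$ tends to $\frac{w+\floor{(t-1)/2}}{2t}$. Since $ma_t(w)$ is the supremum of $m_V$ over all voter matrices with $md_V<w$, this yields the claim. It is convenient to set $x:=w-\ceil{(t+1)/2}$; using the identity $\ceil{(t+1)/2}=t-\floor{(t-1)/2}$ one gets $w+\floor{(t-1)/2}=t+x$, so the target value is $\frac{t+x}{2t}$. In the main case $w>\ceil{(t+1)/2}$ we have $1\le x\le \floor{(t-1)/2}<t$; the boundary case $w=\ceil{(t+1)/2}$ is degenerate, since by Theorem \ref{thm:mdt_lower} no voter matrix has $md_V<\ceil{(t+1)/2}$ at all.

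For the construction I would recycle the voter matrix from the proof of Theorem \ref{thm:r_t_general} with a particular choice of parameters. Pick a large $k$ with $t\mid (k+1)x$ (there are arbitrarily large such $k$) and put $n=2k+1$, with the bottom $k$ voters all-Y. Among the top $k+1$ voters, distribute $x$ Ys to each so that every column receives exactly $(k+1)x/t$ of them; this balanced arrangement is feasible because $x\le t$, by exactly the reasoning used for Theorem \ref{thm:r_t_general} (here it is the special case $M=k+(k+1)x/t$). Then every column of $V$ contains $k+(k+1)x/t$ Ys, which is at least $k+1$ once $k$ is large enough, so $V\in\mathcal V_t$.

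Next I would verify $md_V<w$. The bottom voters support every proposal, while a top voter has only $x$ Ys, so its Hamming distance to any proposal with at least $w$ Ys is at least $w-x=\ceil{(t+1)/2}>t/2$; hence no top voter supports such a proposal, and any proposal with at least $w$ Ys is supported by at most $k<n/2$ voters, i.e.\ not by a majority. Therefore every proposal with majority support has fewer than $w$ Ys, which is exactly $md_V<w$. Finally, $V$ contains $kt+(k+1)x$ Ys, so
\begin{equation*}
    m_V=\frac{kt+(k+1)x}{(2k+1)t}\to\frac{t+x}{2t}=\frac{w}{2t}+\frac{\floor{(t-1)/2}}{2t}\quad\text{as }k\to\infty,
\end{equation*}
which completes the argument.

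The bulk of this is routine: the floor/ceiling identities, the inequality $\ceil{(t+1)/2}>t/2$, and the limit computation. The one point that deserves care is realizing the top block of the matrix, namely a $0$-$1$ matrix with $k+1$ rows all of sum $x$ and $t$ columns all of sum $(k+1)x/t$; such a matrix exists since the obvious necessary conditions ($x\le t$ and matching row/column sum totals) hold. As this is precisely the balanced distribution already described and used in the proof of Theorem \ref{thm:r_t_general}, I would simply invoke it there rather than re-derive it.
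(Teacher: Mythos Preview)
Your argument is correct and follows the same constructive idea as the paper: an all-Y block of voters together with a block of voters having only $x=w-\ceil{(t+1)/2}$ Ys each, so that no proposal with at least $w$ Ys can be supported by more than the all-Y block. The paper's version is simpler in that it takes every top voter to be the \emph{same} $x$-voter $\mathrm{Y}^x\mathrm{N}^{t-x}$ and works with a single matrix rather than a limit, whereas you spread the top voters' Ys evenly over the columns and send $k\to\infty$. Your extra care actually buys something: with identical top voters the last $t-x$ columns are either N-majority (odd $n$) or exactly tied (even $n$, in which case the $n/2$ all-Y voters already constitute ``at least half'' and support the all-Y proposal), so the paper's matrix does not sit cleanly in the normalized class $\mathcal V_t$ with $md_V<w$; your balanced construction makes both conditions hold transparently for large $k$. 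Your treatment of the degenerate endpoint $w=\ceil{(t+1)/2}$ is also appropriate.
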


\begin{proof}
Consider the following voter matrix where the top $\ceil{n/2}$ voters are $(w-\ceil{(t+1)/2})$-voters.

\begin{figure}[h]
\centering
\begin{tikzpicture}[decoration={brace,amplitude=5pt}]
    \matrix (m) [matrix of nodes, column sep = 4.5pt, row sep = -2pt] {
    Y & $\cdots$ & Y & N & $\cdots$ & N\\
    $\vdots$ &  & $\vdots$ & $\vdots$ & & $\vdots$\\
    Y & $\cdots$ & Y & N & $\cdots$ & N\\
    Y & $\cdots$ & Y & Y & $\cdots$ & Y\\
    $\vdots$ &  & $\vdots$ & $\vdots$ & & $\vdots$\\
    Y & $\cdots$ & Y & Y & $\cdots$ & Y\\
    };
    \draw[decorate,thick] ([yshift=-2pt] m-1-6.north east) -- node[right=5pt] {$\ceil{\frac{n}{2}}$} ([yshift=2pt] m-3-6.south east);
    \draw[decorate,thick] ([yshift=-2pt] m-4-6.north east) -- node[right=5pt] {$\floor{\frac{n}{2}}$} ([yshift=2pt] m-6-6.south east);
\end{tikzpicture}
\end{figure}

None of the top $\ceil{n/2}$ voters support a proposal with $w$ or more Ys. This implies $md_V<w$. 
Furthermore, the average majority equals
\begin{equation*}
    \frac{\floor{\frac{n}{2}}t + \ceil{\frac{n}{2}}\left(w-\ceil{\frac{t+1}{2}}\right)}{nt} \geq \frac{\frac{n}{2}t + \frac{n}{2}\left(w-\ceil{\frac{t+1}{2}}\right)}{nt} = \frac{w}{2t} + \frac{\floor{\frac{t-1}{2}}}{2t}.
\end{equation*}
\end{proof}

\begin{figure}[ht]
    \centering
    \includegraphics[width=\columnwidth]{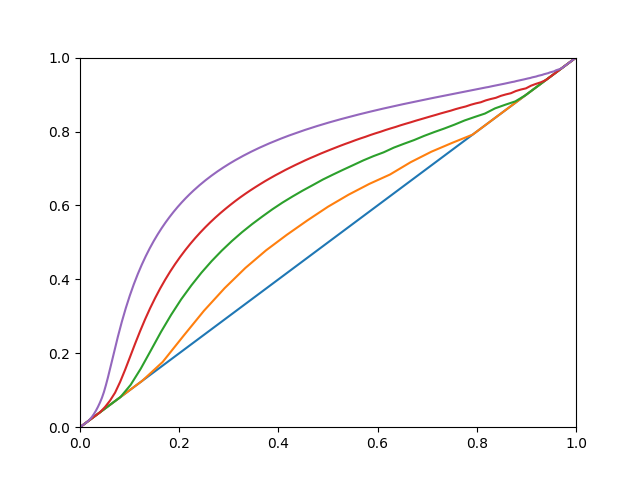}
    \caption{The normalized plot of the values $ma_t(w)$ from $w=\ceil{(t+1)/2}$ to $w=t$ for $t=9$ (blue), $t=49$ (yellow), $t=99$ (green), $t=199$ (red), and $t=499$ (purple). The x-axis plots $\frac{w-\ceil{(t+1)/2}}{\floor{(t-1)/2}}$, the y-axis plots $\frac{ma_t(w)-1/2}{\floor{(t-1)/2}/(2t)}$.}
    \label{fig:m_values}
\end{figure}

Let us examine the values of $ma_t(w)$ in more detail.
The lower bound from the previous lemma shows that $ma_t(w)$ increases at least linearly in $w$.
This bound is tight for $w=t$ (Lemma \ref{lem:ma_t(t)}) as well as for $w=\ceil{(t+1)/2}$.
The latter follows from Theorem \ref{thm:mdt_lower} as it guarantees the existence of a proposal with at least $\ceil{(t+1)/2}$ Ys with majority support for every matrix with an average majority of at least $1/2$.
So is the lower bound always tight?
By computing the values of $ma_t(w)$, we see that this is true for small $t$ (blue line in Figure \ref{fig:m_values}).
For larger $t$ however, it turns out that this is not the case and $ma_t(w)$ grows faster as Figure \ref{fig:m_values} shows.
Note that the family of voter matrices used for the upper bound in Theorem \ref{thm:r_t_general} all lie on the blue line when plotting average majority over $md_V$.
The fact that the other lines in Figure \ref{fig:m_values} lie above the blue line implies that such matrices will yield improved upper bounds on $r_t$.



\begin{figure*}[ht]
    \centering
    \includegraphics[width=0.7\textwidth]{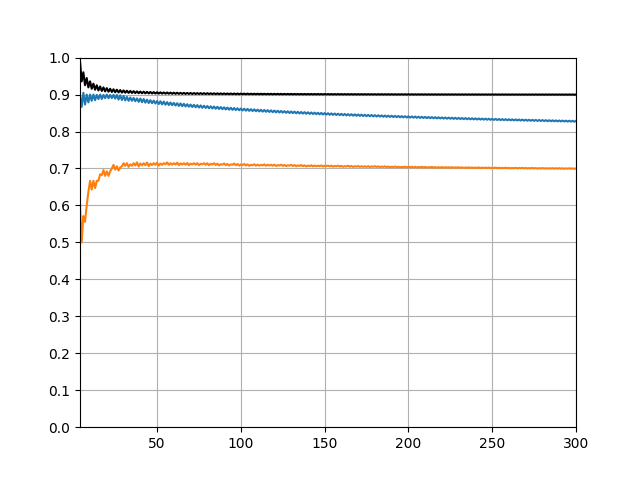}
    \caption{Lower (orange) and upper (blue) bounds on the achievable representativeness $r_t$ derived from the solutions of \eqref{eq:LP} for $t\in[4, 300]$. The black line shows the upper bound from Theorem \ref{thm:r_t_general} that converges to $~0.8989$ as $t \to \infty$.}
    \label{fig:representation_bounds}
\end{figure*}

In the following, we show that the values of $ma_t(w)$ can be computed by solving the following linear program, and study how they can be used to estimate $r_t$.

\begin{alignat}{3}
    &\text{maximize} & & \frac{1}{t}\sum_{l=0}^t lv_l' &\notag\\
    &\text{subject to} &\quad & \sum_{l=0}^t s_{k,l} v_l' < \frac{1}{2}s_{k,t}\tag{LP}\label{eq:LP}, & \quad k=w,\ldots,t\\
    & & & \sum_{l=0}^t v_l' = 1 &\notag
\end{alignat}

\begin{theorem}
For $w\geq \ceil{(t+1)/2}$, let $v_0^*,\ldots, v_t^*$ be a solution of \eqref{eq:LP}. Then $ma_t(w) = \sum_{l=0}^t lv_l^*$.
\end{theorem}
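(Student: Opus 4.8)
The plan is to show two inequalities. For one direction, I would take an optimal voter matrix $V$ achieving $md_V < w$ with average majority $ma_t(w)$, and show that its (normalized) voter-type distribution $v_l' = v_l/n$ is feasible for \eqref{eq:LP}, so that the optimal LP value is at least $ma_t(w)$. Feasibility of the first family of constraints is exactly Lemma \ref{lem:skl_inequality} (inequality \eqref{eq:lp_ineq}), after dividing by $n$ and using $\floor{(n-1)/2}/n < 1/2$; the normalization constraint $\sum_l v_l' = 1$ is immediate; and the objective $\frac{1}{t}\sum_l l v_l'$ equals the fraction of Ys in $V$, which is the average majority $m_V = ma_t(w)$. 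Hence $ma_t(w) \le \sum_l l v_l^*$.

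For the reverse direction, I would start from an optimal LP solution $v_0^*,\ldots,v_t^*$ and construct a voter matrix realizing it. The subtlety is that the $v_l^*$ are real numbers summing to $1$, not integers, and the LP constraints are strict; so I would pick a large integer $n$, set (approximately) $v_l = \floor{n v_l^*}$ copies of an $l$-voter, distributing the Ys within each block of $l$-voters as evenly as possible across the $t$ columns (the balanced construction already used in Theorem \ref{thm:r_t_general} and Lemma \ref{lem:m_lower}), so that every column of $V$ still has a majority of Ys. I then need to check that no $w$-proposal (indeed no $k$-proposal for $k = w,\ldots,t$) is supported by a majority in $V$: the total number of (proposal, supporting voter) incidences over all $k$-proposals is $\sum_l s_{k,l} v_l \approx n \sum_l s_{k,l} v_l^* < n \cdot \frac{1}{2} s_{k,t} = \frac{1}{2}\binom{t}{k} n$ by the LP constraint, so on average a $k$-proposal has fewer than $n/2$ supporters, hence some — and by the symmetry argument behind $s_{k,l}$ being well-defined, actually every — $k$-proposal has at most $\floor{(n-1)/2}$ supporters. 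This gives $md_V < w$, so $ma_t(w) \ge m_V$, and $m_V = \frac{1}{t}\sum_l l v_l/n \to \frac{1}{t}\sum_l l v_l^*$ as $n \to \infty$. Combining, $ma_t(w) = \sum_l l v_l^*$.

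The main obstacle is the second direction: bridging the gap between the continuous, strict-inequality LP and a discrete voter matrix. Concretely I must (i) argue that rounding $n v_l^*$ to integers, for $n$ large, preserves the strict constraints with room to spare (the slack in the optimal LP solution — or an argument that an $\varepsilon$-perturbation stays feasible — handles this), (ii) verify the even-distribution of Ys within each $l$-block can be done so that every column gets the same count and thus remains a majority, and (iii) convert the "average number of supporters of a $k$-proposal is $< n/2$" statement into "every $k$-proposal has $\le \floor{(n-1)/2}$ supporters" using the fact that all $k$-proposals are equivalent under column permutations while $V$'s row multiset is permutation-symmetric — so every $k$-proposal has the same number of supporters. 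I would also need to confirm the LP is feasible and its supremum is attained (it is bounded by $1$ and the feasible region, after relaxing $<$ to $\le$ with the $s_{k,t}/2$ scaled down slightly, is compact), so that "a solution of \eqref{eq:LP}" is well-defined; alternatively the theorem can be read with $ma_t(w)$ equal to the supremum of the LP, which is cleaner and avoids attainment issues.
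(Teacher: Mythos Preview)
Your overall approach is the paper's: for one direction, show feasibility of the normalized type distribution $(v_l/n)_l$ via Lemma~\ref{lem:skl_inequality}; for the other, build a voter matrix from the LP optimum and use a symmetry argument to upgrade ``average support of $k$-proposals is below $n/2$'' to ``every $k$-proposal has fewer than $n/2$ supporters''.

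There is, however, a genuine gap in your reverse-direction construction. The ``balanced construction'' you invoke from Theorem~\ref{thm:r_t_general} and Lemma~\ref{lem:m_lower} (spreading Ys as evenly as possible across the columns within each $l$-block) does \emph{not} in general make the row multiset invariant under column permutations, so step~(iii) of your plan fails as stated. For example, with $t=4$ and two $2$-voters placed as YYNN and NNYY, the columns are balanced, but the $2$-proposal YYNN has one supporter while YNYN has two. The ``symmetry argument behind $s_{k,l}$ being well-defined'' only says that any single $l$-voter supports the same number of $k$-proposals; it does not say that a particular collection of $l$-voters supports every $k$-proposal equally.

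The paper achieves the required symmetry by a different construction: after choosing $n$ large, it multiplies the number of voters by $\prod_{l}\binom{t}{l}$ so that for each $l$ the number of $l$-voters is divisible by $\binom{t}{l}$, and then takes each of the $\binom{t}{l}$ possible $l$-voters \emph{equally often}. This makes the row multiset genuinely invariant under any permutation of the columns, so all $\binom{t}{k}$ $k$-proposals have identical support and the averaging argument goes through. (It also automatically balances the columns, handling your point~(ii).) With this change to the construction, your plan coincides with the paper's proof.
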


\begin{proof}
We first prove $ma_t(w)\leq \sum_{l=0}^t lv_l^*$.
Consider a voter matrix $V$ and for $l=0,\ldots,t$, denote by $v_l'$ the fraction of $l$-voters among all voters in $V$. That means $\sum_{l=0}^t v_l'=1$.
If $md_V<w$, then by Lemma \ref{lem:skl_inequality}
\begin{equation*}
    \sum_{l=0}^t s_{k,l} v_l' \leq \frac{1}{n}s_{k,t}\floor{\frac{n-1}{2}} < \frac{1}{2}s_{k,t}
\end{equation*}
for $k=w,\ldots,t$.
So the voter fractions of any $V$ with $md_V<w$ are a feasible solution of \eqref{eq:LP}.
Furthermore, the objective function of \eqref{eq:LP} describes the average majority in $V$.
Hence, $ma_t(w)\leq \sum_{l=0}^t lv_l'/t \leq \sum_{l=0}^t lv_l^*/t$.

It remains to argue that $ma_t(w)\geq \sum_{l=0}^t lv_l^*$.
From the solution $v_0^*,\ldots, v_t^*$, we construct a symmetric voter matrix $V_{LP}$ as follows.
For $l=0,\ldots,t$, we choose a fraction of about $v_l^*$ voters to be $l$-voters. By choosing $n$ sufficiently large, we can come arbitrarily close to the values $v_l^*$.
Then we multiply the number of voters by $\prod_{k=0}^t\binom{t}{k}$ such that for each $l=0,\ldots, t$, the number of $l$-voters is divisible by $\binom{t}{l}$.
Hence, we can for each $l=0,\ldots,t$ choose the specific $l$-voters in $V_{LP}$ such that all possible $\binom{t}{l}$ $l$-voters occur equally often.

The average majority in the resulting matrix is $\sum_{l=0}^t lv_l^* /t$. Furthermore, for $k=w,\ldots,t$ the total support of all $k$-proposals is
\begin{equation*}
    \sum_{l=0}^t s_{k,l} v_l \leq n \sum_{l=0}^t s_{k,l} v_l^* < \frac{n}{2}s_{k,t}=\frac{n}{2}\binom{t}{k}.
\end{equation*}
By our choice of the voters in $V_{LP}$, the matrix is symmetric in the topics in the following sense.
Permuting the topics does change the number of voters with a specific opinion vector. 
So all $\binom{t}{k}$ $k$-proposals have the same number of supporters, and this number must be smaller than $n/2$.
Therefore, no $k$-proposal with $k= w,\ldots,t$ is supported by $V_{LP}$, i.e.\ $md_{V_{LP}}<w$.
Finally, by construction of \eqref{eq:LP}, the average majority in $V_{LP}$ equals the value of the objective function of \eqref{eq:LP}.
This implies that $ma_t(w)\geq \sum_{l=0}^t lv_l^*$.
\end{proof}



The previously constructed matrix $V_{LP}$ can also be used to give an upper bound on $r_t$ using the values of $ma_t(w)$.
In the matrix, the average majority is $ma_t(w)$ and no proposal with $w$ or more Ys has a majority.
Hence,
\begin{equation}\label{eq:r_t_upper}
    r_t \leq \frac{(w-1)ma_t(w) + (t-w+1)(1-ma_t(w))}{t\cdot ma_t(w)}.
\end{equation}

Furthermore, the values of $ma_t(w)$ also yield a lower bound on $r_t$ by combining it with Lemma \ref{lem:1/2prop} in the following way.

\begin{table*}[h]
\centering
\begin{tabular}{ |c|c|c|c|c|c|c|c| } 
 \hline
 $t$ & 9 & 49 & 99 & 199 & 299 & 499 & 999\\ 
 \hline
 Lower bound & 0.6363 & 0.7098 & 0.7076 & 0.7028 & 0.6989 & 0.6943 & 0.6882\\ 
 \hline
 Upper bound & 0.8787 & 0.8756 & 0.8574 & 0.8379 & 0.8265 & 0.8124 & 0.7944\\
 \hline
\end{tabular}
\caption{Bounds on $r_t$ derived from the solutions of \eqref{eq:LP}. In each case, the 4 most significant figures are shown.}
\label{tbl:r_bounds}
\end{table*}

Consider an arbitrary matrix $V$ where the average majority on a topic is $m\in[0.5,1]$.
Let $w\in[\ceil{(t+1)/2},t]$ be the maximum number such that $ma_t(w)<m\leq ma_t(w+1)$ where we let $ma_t(t+1)=1$.
This implies that there exists a $w$-proposal with majority support.
Therefore,
\begin{align*}
    r_{V}
    \geq \frac{\frac{1}{2}w }{\frac{1}{2}w + 1\left(t-w\right)} = \frac{w}{2t-w}.
\end{align*}

On the other hand, we know from Lemma \ref{lem:1/2prop} that there exists a proposal $p$ with $R_p\geq 1/2-1/t$.
This implies
\begin{equation*}
    r_{V} \geq \frac{\frac{1}{2}-\frac{1}{t}}{m} \geq \frac{t-2}{2tma_t(w+1)}.
\end{equation*}

Therefore,
\begin{equation}\label{eq:r_t_lower}
    r_t \geq r_{V_{LP}} \geq \min_{w=\ceil{(t+1)/2},\ldots,t} \left( \max\left( \frac{w}{2t-w}, \frac{t-2}{2tma_t(w+1)} \right) \right).
\end{equation}


The lower and upper bounds resulting from the estimations \eqref{eq:r_t_lower} and \eqref{eq:r_t_upper} are plotted in Figure \ref{fig:representation_bounds} for all values of $t$ up to 300 together with the upper bound from Theorem \ref{thm:r_t_general}. Some selected values are also shown in Table \ref{tbl:r_bounds}.
All three bounds oscillate between even and odd $t$. They tend to be larger for even $t$ since then $t/2$ matching opinions already mean support compared to $(t+1)/2$ for odd $t$.


The remaining gap between the lower and upper bound stems from the fact that the upper bound examples we use have equal majorities for each topic but we did not prove that this is the worst case. If we knew that $\min_{V\in \mathcal{V}_t} r_V$ is obtained for a matrix with equal majorities, this would show the upper bound \eqref{eq:r_t_upper} to be tight.

\section{Conclusion}
In this paper, we have quantified the price of majority support.
This gives us a better understanding of how much representativeness needs to be sacrificed if majority support is required.
We suspect that the upper bound derived from $ma_t(w)$ (blue line in Figure \ref{fig:representation_bounds}) is tight. It remains as future work to prove this and to find the limit it converges to for $t\to \infty$.

Furthermore, this paper opens several possibilities for future work:
First, this work provides guarantees on the representativeness of majority supported proposals.
One could study the frequency of the examples that limit these guarantees (upper bound examples) by either assuming a distribution on the voter opinions or using real world data.
Another possibility is to extend the problem to more than two parties: The results of this paper quantify how well voters can be represented in a system of two (opposed) parties. How well can they be represented by a larger number of parties?
Finally, since we focus on binary issues in this paper, a further possible extension is to study multiple choice or continuous decisions, or issues which are not mutually independent (as in judgement aggregation).

\appendix
\section{Appendix}

\begin{lemma}\label{lem:bin_sum_app}
For $l=0,\ldots, t$,
\begin{equation*}
    \sum_{k = \ceil{t/2}}^{t} (2k-t) s_{k,l} = l \binom{t-1}{\floor{t/2}}.
\end{equation*}
\end{lemma}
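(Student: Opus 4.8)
The plan is to prove the identity with a two‑variable generating function and then collapse it using two one‑dimensional binomial identities. Since Lemma~\ref{lem:bin_sum_app} is invoked in Theorem~\ref{thm:mdt_lower} only for odd $t$, I describe the argument for odd $t$, where it is cleanest. Fix an $l$‑voter $v$ and, for a proposal $p$, record both its number of Ys and its number of agreements with $v$. Each topic on which $v$ says Y contributes a factor $1+qr$ (if $p$ says N there: factor $1$; if $p$ says Y: one more Y and one more agreement, factor $qr$), and each topic on which $v$ says N contributes $q+r$ (if $p$ says Y: factor $q$; if $p$ says N: one more agreement, factor $r$), so $\sum_{p\in\{Y,N\}^t}q^{\#Y(p)}r^{\#\mathrm{agr}(v,p)}=(1+qr)^l(q+r)^{t-l}$. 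As $v$ supports $p$ exactly when $\#\mathrm{agr}(v,p)\ge\ceil{t/2}$, this gives $s_{k,l}=\sum_{j\ge\ceil{t/2}}[q^kr^j](1+qr)^l(q+r)^{t-l}$, hence
\[ \sum_{k=\ceil{t/2}}^{t}(2k-t)s_{k,l}=\sum_{k\ge\ceil{t/2}}\sum_{j\ge\ceil{t/2}}(2k-t)\,[q^kr^j](1+qr)^l(q+r)^{t-l}. \]

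Expanding $(1+qr)^l=\sum_i\binom li(qr)^i$ and $(q+r)^{t-l}=\sum_h\binom{t-l}hq^hr^{t-l-h}$, the coefficient of $q^kr^j$ is $\binom li\binom{t-l}h$ with $k=i+h$ and $j=i+(t-l-h)$; so $2k-t=(2i-l)+(2h-(t-l))$, and both $k\ge\ceil{t/2}$ and $j\ge\ceil{t/2}$ amount to $i\ge a(h):=\ceil{t/2}-\min(h,\,t-l-h)$. Grouping the double sum by $h$ and then $i$ turns it into $\sum_{h=0}^{t-l}\binom{t-l}h\sum_{i\ge a(h)}\bigl((2i-l)+(2h-(t-l))\bigr)\binom li$. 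For the $(2i-l)$ part I would use $(2i-l)\binom li=l\bigl(\binom{l-1}{i-1}-\binom{l-1}i\bigr)$, which telescopes to $\sum_{i\ge a}(2i-l)\binom li=l\binom{l-1}{a-1}$. For the $(2h-(t-l))$ part I would observe that $a(h)$, hence $\sum_{i\ge a(h)}\binom li$, is invariant under $h\mapsto(t-l)-h$ while $\binom{t-l}h(2h-(t-l))$ is anti‑invariant under it, so this whole contribution cancels upon summing over $h$. What remains is
\[ \sum_{k=\ceil{t/2}}^{t}(2k-t)s_{k,l}=l\sum_{h=0}^{t-l}\binom{t-l}h\binom{l-1}{a(h)-1}. \]

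It then remains to show $\sum_{h=0}^{t-l}\binom{t-l}h\binom{l-1}{a(h)-1}=\binom{t-1}{\floor{t/2}}$. Splitting the sum at $h=(t-l)/2$: for $h\le(t-l)/2$ one has $a(h)-1=\ceil{t/2}-1-h=\floor{t/2}-h$ (using $\ceil{t/2}-1=\floor{t/2}$ for odd $t$), and for $h>(t-l)/2$ the substitution $h\mapsto(t-l)-h$ rewrites the summand as $\binom{t-l}h\binom{l-1}{\floor{t/2}-h}$ as well, so the two pieces fold into $\sum_h\binom{t-l}h\binom{l-1}{\floor{t/2}-h}$. For odd $t$ this last summand is itself symmetric under $h\mapsto(t-l)-h$ (here the oddness is essential: $(\floor{t/2}-h)+(\floor{t/2}-(t-l-h))=2\floor{t/2}-t+l=l-1$), so the fold recovers the full range, and Vandermonde's identity yields $\sum_{h=0}^{t-l}\binom{t-l}h\binom{l-1}{\floor{t/2}-h}=\binom{(t-l)+(l-1)}{\floor{t/2}}=\binom{t-1}{\floor{t/2}}$. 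Combining the steps gives the claim.

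I expect the main obstacle to be the two collapses in the previous two paragraphs: verifying that the stray $(2h-(t-l))$ contribution really vanishes, and pushing the final folding‑plus‑Vandermonde step through. The latter is exactly where the parity of $t$ matters — the $h\mapsto(t-l)-h$ symmetry of $\binom{t-l}h\binom{l-1}{\floor{t/2}-h}$ needed for the fold holds for odd $t$, which is the only case used here.
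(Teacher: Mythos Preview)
Your proof is correct and follows essentially the same route as the paper's. After the generating-function setup, your parameters $(i,h)$ coincide with the paper's $(x,y)=(x,k-x)$, and both arguments proceed by the same three ingredients: the telescoping identity $\sum_{i\ge a}(2i-l)\binom{l}{i}=l\binom{l-1}{a-1}$, the involution $h\mapsto (t-l)-h$ to eliminate the remaining weight, and Vandermonde for the final collapse. The only differences are cosmetic: you package the double sum via the bivariate generating function $(1+qr)^l(q+r)^{t-l}$ and apply the symmetry and telescoping in a slightly different order, whereas the paper works directly from the explicit formula for $s_{k,l}$; and you explicitly restrict to odd $t$, which is indeed the only case needed and the only case for which the paper's appendix argument goes through as written.
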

\begin{proof}
Let
\begin{equation*}
    f(l) = \sum_{k = \ceil{t/2}}^{t} (2k-t) s_{k,l}.
\end{equation*}
Note that we use the convention that $\binom{n}{k}=0$ for $k>n$ and $k<0$. Hence, the upper summation bound in the formula for $s_{k,l}$ from Lemma \ref{lem:s_kl} can be omitted. 
Inserting this formula yields
\begin{align*}
    f(l) &= \sum_{k=\ceil{t/2}}^{t} \sum_{x=\ceil{(k+l-\floor{t/2})/2}}^{\infty} \binom{l}{x}\binom{t-l}{k-x} (2k-t)\\
    &= \sum_{x=\ceil{(l+1)/2}}^\infty \binom{l}{x} \sum_{k=\ceil{t/2}}^{2x-l+\floor{t/2}} \binom{t-l}{k-x} (2k-t)\\
    &= \sum_{x=\ceil{(l+1)/2}}^\infty \binom{l}{x} \sum_{y=\ceil{t/2}-x}^{t-l-(\ceil{t/2}-x)} \binom{t-l}{y} (2y+2x-t).
\end{align*}
We swapped summations in the second step and substituted $y=k-x$ in the third step.
Note that
\begin{equation*}
    \binom{t-l}{y}(2y+2x-t)+\binom{t-l}{t-l-y}(2(t-l-y)+2x-t) = 2\binom{t-l}{y}(2x-l).
\end{equation*}
Using this we further conclude
\begin{align*}
    f(l) &= \sum_{x=\ceil{(l+1)/2}}^\infty \binom{l}{x} \sum_{y=\ceil{t/2}-x}^{x-l+\floor{t/2}} \binom{t-l}{y} (2x-l)\\
    &= \sum_{y=\ceil{t/2}-l}^{\floor{t/2}}\binom{t-l}{y} \sum_{x=\max(\ceil{t/2}-y,y+l-\floor{t/2})}^\infty \binom{l}{x} (2x-l).
\end{align*}
In the second step, we switched the summation again.
Now let $x_0=\max(\ceil{t/2}-y,y+l-\floor{t/2})$. Then
\begin{align*}
    \sum_{x=x_0}^\infty \binom{l}{x} (2x-l)
    &= \sum_{x=x_0}^\infty x\binom{l}{x} - (l-x)\binom{l}{x}\\
    &= \sum_{x=x_0}^\infty l\binom{l-1}{x-1} - l\binom{l-1}{x}
    = l\binom{l-1}{x_0-1}.
\end{align*}
Furthermore, the definition of $x_0$ implies
\begin{equation*}
    \binom{l-1}{\floor{t/2}-y} = \binom{l-1}{y+l-\ceil{t/2}} = \binom{l-1}{x_0-1}.
\end{equation*}
With the previous two properties, we establish
\begin{align*}
    f(l) &= \sum_{y=\ceil{t/2}-l}^{\floor{t/2}}\binom{t-l}{y} l \binom{l-1}{\floor{t/2}-y}\\
    &= l \sum_{z=0}^{l-1} \binom{t-l}{\floor{t/2}-z}\binom{l-1}{z} = l\binom{t-1}{\floor{t/2}}.
\end{align*}
Here we substituted $z=\floor{t/2}-y$, and the last step follows from the well-known combinatorial identity $\binom{n}{k} = \sum_{j} \binom{i}{j}\binom{n-i}{k-j}$.
\end{proof}

\begin{acks}
We thank the anonymous reviewers for their helpful comments and suggestions.
\end{acks}



\bibliographystyle{ACM-Reference-Format} 
\bibliography{aamas22.bib}


\end{document}